\def\eqref#1{equation~\ref{#1}}
\def\1{\bm{1}}
\DeclareMathAlphabet{\mathsfit}{\encodingdefault}{\sfdefault}{m}{sl}
\SetMathAlphabet{\mathsfit}{bold}{\encodingdefault}{\sfdefault}{bx}{n}
\definecolor{ForestGreen}{cmyk}{0.864, 0.0, 0.429, 0.396}
\newcommand{\ie}{i.e.,\xspace}
\newcommand{\miattackabbr}{IHA\xspace}
\newcommand{\miattackfull}{Inverse Hessian Attack\xspace}
\newcommand{\attackabbrbolded}{\textbf{I}nverse \textbf{H}essian \textbf{A}ttack\xspace}
\newcommand\shortsection[1]{\vspace{6pt}{\noindent\textbf{#1.}}}
\newcommand\shortersection[1]{\vspace{6pt}{\noindent\em #1.}}
\newenvironment{proof}{\par\noindent{\bf Proof\ }}{\hfill\BlackBox\\[2mm]}
\theoremstyle{definition}
\newtheorem{theorem}{Theorem}[section]
\newtheorem{assumption}{Assumption}
\newtheorem{lemma}[theorem]{Lemma}
\newtheorem{definition}{Definition}[section]
\title{Do Parameters Reveal More than Loss for \\Membership Inference?}
\author{\name Anshuman Suri \email as9rw@virginia.edu \\
      University of Virginia
      \AND
      \name Xiao Zhang \email xiao.zhang@cispa.de \\
      \addr CISPA Helmholtz Center for Information Security
      \AND
      \name David Evans \email evans@virginia.edu\\
      \addr University of Virginia}
\newcommand{\rebuttal}[1]{#1}
\newcommand{\revision}[1]{#1}
\begin{document}

\maketitle

\begin{abstract}
Membership inference attacks are used as a key tool for disclosure auditing. They aim to infer whether an individual record was used to train a model. While such evaluations are useful to demonstrate risk, they are computationally expensive and often make strong assumptions about potential adversaries' access to models and training environments, and thus do not provide tight bounds on leakage from potential attacks.
We show how prior claims around black-box access being sufficient for optimal membership inference do not hold for stochastic gradient descent, and that optimal membership inference indeed requires white-box access.
Our theoretical results lead to a new white-box inference attack, \textit{\miattackabbr} (\attackabbrbolded), that explicitly uses model parameters by taking advantage of computing inverse-Hessian vector products. Our results show that both auditors and adversaries may be able to benefit from access to model parameters, and we advocate for further research into white-box methods for membership inference. %
\end{abstract}

\section{Introduction}
\label{sec:intro}

Models produced by using machine learning on private training data can leak sensitive information about data used to train or tune the model \citep{salem_sok_2023}.
Researchers study these privacy risks by either designing and evaluating attacks to simulate what motivated adversaries may be able to infer in particular settings or by developing privacy methods that provide strong guarantees, often based on some notion of differential privacy \citep{dwork_calibrating_2006}, that bounds information disclosure from any attack. Although both developing attacks and formal privacy proofs are important, conducting meaningful privacy audits is different from both approaches.
Empirical methods, usually in the form of attack simulations, are inherently limited by the attacks considered and the uncertainty about the possibility of better attacks, while theoretical proofs require many assumptions or result in loose bounds. Further, any claims based on theoretical results depend on careful analysis that the system as implemented is consistent with the theory.
If there is a theoretical result that prescribes an optimal attack, then empirical results with that attack (or approximations of the attack) can offer a more meaningful estimate of privacy risk than is possible with theory or experiments alone. While the theory needs to cover all data distributions, experiments with an optimal attack focus on the actual distribution and given model, resulting in tighter and more relevant privacy evaluations.

Privacy audits can also be important in adversarial contexts, where a regulator or external advocate conducts them to test a released model.
Auditors with elevated model access (such as associated training environments or data) may be able to take advantage of more information to produce better estimates of what an adversary could do without that information.
Auditing is orthogonal to proofs that establish differential privacy bounds or other privacy notions. As outlined by \citet{cummings_challenges_2023}, theoretical bounds may be ``too conservative or inaccurate in some settings'', and it may not always be possible to come up with proofs or theoretical bounds %
that ensure models do not  ``violate disclosure requirements in ways that are not captured by differential privacy''. Empirical auditing can provide a more meaningful measure of privacy leakage for these situations.

The most common disclosure auditing approach today is to conduct membership inference attacks \citep{kumar_ml_2020} and related attacks that attempt to extract specific data \citep{cummings_challenges_2023}.
While membership inference assumes the adversary already knows the full candidate record, it may still constitute a direct privacy risk when revealing the inclusion of a known record in the training data itself, which leaks sensitive information.
In most scenarios, however, membership disclosure by itself is not a serious privacy risk, but rather used as a proxy for understanding information leakage that may result in more serious privacy violations.
Membership inference is simple to define, relatively easy to measure, and aligns well with differential privacy. This has resulted in it being widely used as a method for auditing disclosure risks for machine learning \citep{kumar_ml_2020, yeom_overfitting_2020, kazmi_panoramia_2024, azize_how_2024}.

Prior results on membership inference attacks have largely focused on the \emph{black-box} setting, where the attacker only has input--output access to the target model. This focus has been reinforced by folklore and results demonstrating negligible gains from parameter access (known as \emph{white-box} attacks) \citep{nasr_comprehensive_2018, carlini_membership_2022}. A well-known theoretical result by \citet{sablayrolles_white-box_2019} proves that black-box access is sufficient for optimal membership inference under certain conditions.
This result has been the basis of several subsequent works \citep{ye_enhanced_2022, chaudhari_chameleon_2024}.
However, the assumptions made in its derivation do not hold for most models, including ones trained with stochastic gradient descent (SGD). \revision{This theoretical result has detered researchers from exploring more the potential for inference methods that utilize parameter access, even though the theoretical result does not apply to common machine learning settings.}

\shortsection{Contributions}
\revision{In this work, we revisit previous assumptions surrounding the optimality of membership inference attacks.}
Utilizing recent advances in discrete-time SGD-dynamics \citep{liu_noise_2021,ziyin_strength_2021}, we provide a more accurate formulation of the optimal membership inference attack \rebuttal{that demonstrates the limitations of} the results from  \citet{sablayrolles_white-box_2019}. In particular, we show that the claim that black-box access is sufficient does not hold for models trained using SGD (\Cref{sec:optimal MI SGD}). Our theoretical result also prescribes an attack \revision{that exploits white-box access for auditing membership leakage}, which we call the \miattackfull (\miattackabbr) (\Cref{sec:optimal mia attack}). We empirically demonstrate its effectiveness in simple settings, \revision{showing that it outperforms state-of-the-art reference-model-based and prior white-box attacks (\Cref{sec:experiments}).
Our analyses suggest that the improved auditing performance can be directly attributed to access to the model's parameters.
}

\section{Preliminaries}
\label{sec:preliminaries}

The section provides background on membership inference (\Cref{ssec:mibackground}) and SGD dynamics (\Cref{sec:sgddynamics}).

\subsection{Membership Inference} \label{ssec:mibackground}
Following the framework established by \citet{sablayrolles_white-box_2019}, let $\mathcal{D}$ be a data distribution from which $n$ records $\bm{z}_1, \bm{z}_2, \ldots, \bm{z}_n$ are i.i.d.\ sampled with $\bm{z}_i = (\bm{x}_i, y_i)$ being the $i$-th record. Let $\bm{w}\in\mathbb{R}^d$ be the model parameters produced by some machine learning algorithm on a training dataset $D$. Assume $m_1, m_2, \ldots, m_n$ follow a Bernoulli distribution with $\gamma= \mathbb{P} (m_i=1)$, where $m_i$ is the membership indicator of $\bm{z}_i$ (\ie $m_i = 1$ if $\bm{z}_i\in D$, and $m_i = 0$ otherwise). Given $\bm{w}$, a \emph{membership inference attack}  aims to predict the unknown membership $m_i$ for any given record $\bm{z}_i$.
\begin{definition}[Membership Inference]
\label{def:membership inference}
Let $\bm{w}$ be the parameters of the target model and $\bm{z}_1$ be a record. Inferring the membership of $\bm{z}_1$ to the training set of $\bm{w}$ is equivalent to computing:
\begin{align*}
    \mathcal{M}(\bm{w}, \bm{z}_1) = \mathbb{P}(m_1 = 1 \: | \: \bm{w}, \bm{z}_1).
\end{align*}
\end{definition}
Let $\mathbb{P}(\bm{w}\:|\: \bm{z}_1,\ldots,\bm{z}_n,m_1,\ldots,m_n)$ be the posterior distribution of model parameters produced by some randomized machine learning algorithm (\ie stochastic gradient descent). Applying Bayes' theorem, \citet{sablayrolles_white-box_2019} derived the following explicit formula for $\mathcal{M}(\bm{w}, \bm{z}_1)$.

\begin{lemma} [\citet{sablayrolles_white-box_2019}]
\label{lem:optimial MI likelihood}
Let $\mathcal{T} = \{\bm{z}_2,\ldots,\bm{z}_n, m_2, \ldots, m_n\}$. Given model parameters $\bm{w}$ and a record $\bm{z}_1$, the optimal membership inference is given by:
    \begin{align}
        \mathcal{M}(\bm{w}, \bm{z}_1) = \mathbb{E}_{\mathcal{T}} \bigg[\sigma\bigg(\ln\bigg(\frac{p(\bm{w} \:|\: m_1=1, \bm{z}_1, \mathcal{T})}{p(\bm{w} \:|\: m_1=0, \bm{z}_1, \mathcal{T})}\bigg) + \ln\bigg(\frac{\gamma}{1-\gamma}\bigg)\bigg)\bigg],
    \end{align}
    where $\sigma(u) = (1+\exp(-u))^{-1}$ is the Sigmoid function, and $\gamma = \mathbb{P}(m_1 = 1)$.
\end{lemma}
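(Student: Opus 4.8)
The plan is to peel $\mathcal{M}(\bm{w},\bm{z}_1) = \mathbb{P}(m_1 = 1 \mid \bm{w}, \bm{z}_1)$ apart in two moves: first introduce the nuisance variables $\mathcal{T} = \{\bm{z}_2,\dots,\bm{z}_n,m_2,\dots,m_n\}$ via the law of total probability, and then apply Bayes' theorem to the conditional that remains. Concretely, I would begin from
\begin{align*}
\mathcal{M}(\bm{w},\bm{z}_1) \;=\; \mathbb{P}(m_1 = 1 \mid \bm{w}, \bm{z}_1) \;=\; \mathbb{E}_{\mathcal{T}}\big[\,\mathbb{P}(m_1 = 1 \mid \bm{w}, \bm{z}_1, \mathcal{T})\,\big],
\end{align*}
so the task reduces to identifying the inner conditional with the sigmoid integrand in the statement.

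For the inner term I would write Bayes' theorem in the form
\begin{align*}
\mathbb{P}(m_1 = 1 \mid \bm{w}, \bm{z}_1, \mathcal{T}) \;=\; \frac{p(\bm{w}\mid m_1 = 1, \bm{z}_1, \mathcal{T})\,\mathbb{P}(m_1 = 1 \mid \bm{z}_1, \mathcal{T})}{\sum_{b\in\{0,1\}} p(\bm{w}\mid m_1 = b, \bm{z}_1, \mathcal{T})\,\mathbb{P}(m_1 = b \mid \bm{z}_1, \mathcal{T})},
\end{align*}
and then invoke the modeling assumptions of \Cref{sec:preliminaries}: since the $\bm{z}_i$ are i.i.d.\ and the $m_i$ are independent Bernoulli($\gamma$), $m_1$ is independent of $(\bm{z}_1,\mathcal{T})$, so $\mathbb{P}(m_1 = b \mid \bm{z}_1, \mathcal{T}) = \gamma^{b}(1-\gamma)^{1-b}$. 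Writing $p_b := p(\bm{w}\mid m_1 = b, \bm{z}_1, \mathcal{T})$, the ratio collapses to $\gamma p_1 / (\gamma p_1 + (1-\gamma) p_0)$; dividing numerator and denominator by $\gamma p_1$ and rewriting the residual ratio as $\exp$ of its logarithm gives exactly $\sigma\!\big(\ln(p_1/p_0) + \ln(\gamma/(1-\gamma))\big)$. Substituting this back under $\mathbb{E}_{\mathcal{T}}$ yields the claimed formula; this part is purely mechanical.

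The genuinely delicate point — the one I would be most careful about — is the meaning of the outer expectation $\mathbb{E}_{\mathcal{T}}$. The identity above is exact when $\mathcal{T}$ is drawn from its posterior $p(\mathcal{T}\mid\bm{w},\bm{z}_1)$. A priori $\mathcal{T}$ is independent of $\bm{z}_1$, so $p(\mathcal{T}\mid\bm{z}_1)=p(\mathcal{T})$; but collapsing $p(\mathcal{T}\mid\bm{w},\bm{z}_1)$ further down to the prior $p(\mathcal{T})$ requires the additional hypothesis that observing the trained parameters $\bm{w}$ reveals negligible information about the other records. This is harmless for idealized (e.g., Gibbs-posterior) learners but, as the remainder of the paper argues, is exactly the assumption that fails for SGD. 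I would therefore carry out the derivation as above, read $\mathbb{E}_{\mathcal{T}}$ in the sense used by \citet{sablayrolles_white-box_2019}, and explicitly flag this expectation's measure as the hinge on which the subsequent analysis in \Cref{sec:optimal MI SGD} turns.
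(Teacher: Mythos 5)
Your proposal is correct: the two-step argument (law of total expectation over $\mathcal{T}$, then Bayes' theorem with the independent Bernoulli($\gamma$) prior on $m_1$, followed by the algebraic rewriting of $\gamma p_1/(\gamma p_1 + (1-\gamma)p_0)$ into the sigmoid form) is exactly the standard derivation of this result, which the paper itself does not reprove but imports from \citet{sablayrolles_white-box_2019}. Your closing remark is also apt --- the exact identity requires $\mathbb{E}_{\mathcal{T}}$ to be taken under the posterior $p(\mathcal{T}\mid\bm{w},\bm{z}_1)$, a point the lemma statement leaves implicit and which is worth flagging since the rest of the paper hinges on how the conditional densities $p(\bm{w}\mid m_1,\bm{z}_1,\mathcal{T})$ behave under SGD.
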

To use Lemma \ref{lem:optimial MI likelihood}, one needs to characterize the posterior, $\mathbb{P}(\bm{w}\:|\: \bm{z}_1,\ldots,\bm{z}_n,m_1,\ldots,m_n)$, to make explicit the effect of the inferred record $\bm{z}_1$ on the optimal membership inference $\mathcal{M}(\bm{w}, \bm{z}_1)$.
Recent advances in discrete-time SGD dynamics \citep{liu_noise_2021,ziyin_strength_2021} literature can help provide a connection between the posterior and model parameters.

\subsection{Discrete-time SGD Dynamics}
\label{sec:sgddynamics}

A line of theoretical work \citep{welling_bayesian_2011,sato_approximation_2014, stephan_stochastic_2017,liu_noise_2021,ziyin_strength_2021} has analyzed the continuous- and discrete-time dynamics of stochastic gradient methods and provided insights for understanding deep learning generalization. 
Let $L_\mathrm{tot}(\bm{w}) = L(\bm{w}) + \frac{\alpha}{2} \|\bm{w}\|_2^2$ be the $\ell_2$-regularized total loss that we aim to optimize, where $\alpha\geq 0$ denotes the hyperparameter that controls the regularization strength. Consider an SGD algorithm with the following update rule (for $t=1,2,3,\ldots$):
\begin{equation}
    \begin{cases}
    \label{eq:SGD update rule}
        \bm{g}_t & = \nabla L_{\mathrm{tot}}(\bm{w}_{t-1}) + \bm{\eta}_{t-1}; \\
        \bm{h}_t & = \mu\bm{h}_{t-1} + \bm{g}_t; \\
        \bm{w}_t & = \bm{w}_{t-1} - \lambda\bm{h}_t.
    \end{cases}
\end{equation}
Here, $\mu\in[0,1)$ is the momentum, $\lambda>0$ is the learning rate, and
$$\bm{\eta}_t = \frac{1}{S} \sum_{i\in \mathcal{B}_t} (\nabla \ell(\bm{w}_{t}, \bm{z}_i) + \alpha \bm{w}_{t}) - \nabla L_{\mathrm{tot}}(\bm{w}_{t}) = \frac{1}{S} \sum_{i\in \mathcal{B}_t} \nabla \ell(\bm{w}_{t}, \bm{z}_i) - \nabla L(\bm{w}_{t})$$
represents the unbiased mini-batch noise, where $\mathcal{B}_t$ is a randomly sampled batch of examples with size $S$ from the training dataset $D$, and 
$L(\bm{w}) = \frac{1}{n} \sum_{\bm{z}\in D} \ell(\bm{w}, \bm{z})$.

Assuming a model is trained using SGD according to the update rule defined by Equation \ref{eq:SGD update rule} on a \emph{quadratic loss} and arrives at a \emph{stationary state}, \citet{liu_noise_2021} established a theoretical connection between the Hessian matrix $\mathbf{H}$, the asymptotic noise covariance $\mathbf{C} = \lim_{t\rightarrow\infty} \mathbb{E}_{\bm{w}_t} [\mathrm{cov}(\bm\eta_t, \bm\eta_t)]$, and the asymptotic model fluctuation $\mathbf{\Sigma} = \lim_{t\rightarrow\infty}\mathrm{cov}(\bm{w}_t, \bm{w}_t)$. 
We next lay out the two imposed assumptions.

\begin{assumption}[Quadratic Loss]
\label{asmp:quadratic_loss}
    The total loss function $L_\mathrm{tot}(\bm{w})$ is either globally quadratic or locally quadratic close to a local minimum $\bm{w}^*$. Specifically, the loss function can be approximated as:
    \begin{align}
    \label{eq:quadratic_loss}
        L_\mathrm{tot}(\bm{w}) = L_\mathrm{tot}(\bm{w}^*) + \frac{1}{2}(\bm{w} - \bm{w}^*)^\top (\mathbf{H}(\bm{w}^*) + \alpha\mathbf{I}_d) (\bm{w} - \bm{w}^*) + o(\|\bm{w}-\bm{w}^*\|_2^2),
    \end{align}
    where $\bm{w}^*$ is a local minimum, $\mathbf{H}(\bm{w}^*)$ denotes the Hessian matrix at $\bm{w}^*$ with respect to the unregularized loss function $L(\bm{w})$, and $\mathbf{I}_d$ stands for the $d\times d$ identity matrix.
\end{assumption}

\begin{assumption}[Stationary-State]
\label{asmp:stationary_state}
    After a sufficient number of iterations, models trained with SGD defined by Equation \ref{eq:SGD update rule} arrive at a stationary state, \ie the asymptotic model fluctuation $\bm\Sigma$ exists and is finite.
\end{assumption}

Under the above assumptions, \citet{liu_noise_2021} proved the following theorem that describes model fluctuations of discrete SGD in a
quadratic potential with connections to the Hessian matrix and the noise covariance:

\begin{theorem} [SGD Stationary distribution with momentum]
\label{thm:stationary sgd}
Let $\bm{w}$ be updated with SGD defined by the update rule in Equation \ref{eq:SGD update rule} with momentum $\mu\in[0,1)$. Under Assumptions \ref{asmp:quadratic_loss} and \ref{asmp:stationary_state}, if we additionally suppose $\mathbf{C}$ commutes with $\mathbf{H}(\bm{w}^*)$, then the asymptotic model fluctuation satisfies
\begin{align*}
    \bm\Sigma = \bigg[ \frac{\lambda \big(\mathbf{H}(\bm{w}^*)+\alpha\mathbf{I}_d\big)}{1+\mu} \cdot \bigg(2 \mathbf{I}_d - \frac{\lambda \big(\mathbf{H}(\bm{w}^*)+\alpha\mathbf{I}_d\big)}{1+\mu}\bigg)  \bigg]^{-1} \frac{\lambda^2 \mathbf{C}}{1-\mu^2}.
\end{align*}
\end{theorem}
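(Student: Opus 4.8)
The plan is to linearize the momentum-SGD recursion about the minimum $\bm{w}^*$, use the commutation of $\mathbf{C}$ and $\mathbf{H} := \mathbf{H}(\bm{w}^*)$ to split it into decoupled scalar recursions, and in each eigendirection solve a $2\times2$ discrete Lyapunov equation for the stationary variance, then reassemble. First I would pass to $\bm{u}_t = \bm{w}_t - \bm{w}^*$: under Assumption~\ref{asmp:quadratic_loss} the stochastic gradient at $\bm{w}_t$ equals $\mathbf{H}\bm{u}_t + \bm\eta_t$ with $\mathbb{E}[\bm\eta_t\mid\mathcal{F}_t]=\bm 0$ and, at stationarity, $\mathbb{E}[\bm\eta_t\bm\eta_t^\top]\to\mathbf{C}$. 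Substituting into the update in Equation~\ref{eq:SGD update rule} and eliminating the momentum buffer gives a second-order linear recursion
\begin{align*}
\bm{u}_{t+1} = \big[(1+\mu)\mathbf{I}_d - \lambda\mathbf{H}\big]\bm{u}_t - \mu\,\bm{u}_{t-1} - \lambda\,\bm\eta_t .
\end{align*}
Since $\mathbf{C}$ commutes with $\mathbf{H}$, an orthogonal $\mathbf{Q}$ diagonalizes both; rotating into that basis turns the recursion into $d$ scalar ones, $u_{t+1} = \beta u_t - \mu u_{t-1} - \lambda\eta_t$ with $\beta := 1+\mu-\lambda h$, indexed by the paired eigenvalues $(h,c)$ of $(\mathbf{H},\mathbf{C})$, the rotated noise coordinates being pairwise uncorrelated with variances $c$. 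Off-diagonal entries of the stationary covariance in this basis obey a homogeneous recursion, so they vanish once the relevant spectral radius is below one (which Assumption~\ref{asmp:stationary_state} grants, equivalently $0<\lambda h<2(1+\mu)$); hence it is enough to find $\mathrm{var}(u_t)$ per direction.

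For the scalar problem I would take the companion state $\bm{s}_t = (u_t,u_{t-1})^\top$, so $\bm{s}_{t+1} = \mathbf{A}\bm{s}_t - \lambda\bm{e}_1\eta_t$ with $\mathbf{A} = \left(\begin{smallmatrix}\beta & -\mu\\ 1 & 0\end{smallmatrix}\right)$. Because $\mathbb{E}[u_t\eta_t]=\mathbb{E}[u_{t-1}\eta_t]=0$ (the noise is a martingale difference), the stationary covariance $\mathbf{S} = \left(\begin{smallmatrix}\Sigma & Q\\ Q & \Sigma\end{smallmatrix}\right)$ — its diagonal entries equal by stationarity — solves $\mathbf{S} = \mathbf{A}\mathbf{S}\mathbf{A}^\top + \lambda^2 c\,\bm{e}_1\bm{e}_1^\top$. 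Its $(1,2)$ entry gives $(1+\mu)Q = \beta\Sigma$ and its $(1,1)$ entry gives $\Sigma = (\beta^2+\mu^2)\Sigma - 2\beta\mu Q + \lambda^2 c$; eliminating $Q$ and using $(1+\mu)^2-\beta^2 = \lambda h\,(2(1+\mu)-\lambda h)$ leaves
\begin{align*}
\Sigma = \frac{\lambda^2 c}{1-\mu^2}\cdot\Big[\tfrac{\lambda h}{1+\mu}\big(2 - \tfrac{\lambda h}{1+\mu}\big)\Big]^{-1}.
\end{align*}
Since $\mathbf{H}$, $\mathbf{C}$, and therefore $\bm\Sigma$ share the eigenbasis $\mathbf{Q}$, collecting these identities over all directions reproduces exactly the matrix formula claimed.

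The Lyapunov algebra and the final factorization are routine. The step I expect to need the most care is the modeling reduction at the start: recasting the momentum update (whatever buffer convention Equation~\ref{eq:SGD update rule} uses) as the displayed second-order linear recursion, verifying that the noise enters purely additively because $\mathbb{E}[\bm\eta_t\mid\mathcal{F}_t]=\bm 0$ kills its cross terms with $\bm{u}_t$ and $\bm{u}_{t-1}$, and justifying via Assumption~\ref{asmp:stationary_state} that the stationary covariance exists, is diagonal in the common eigenbasis, and has equal diagonal blocks in the companion coordinates — this is precisely where the stationary-state and quadratic-loss assumptions together with the commutation hypothesis are used.
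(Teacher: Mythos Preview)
The paper does not actually prove Theorem~\ref{thm:stationary sgd}; it is quoted as a result of \citet{liu_noise_2021} and merely invoked (together with Theorem~\ref{thm:sgd noise covariance}) in the proof of Theorem~\ref{thm:sgd posterior new}. So there is no ``paper's own proof'' to compare against here.

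That said, your argument is correct and is essentially the derivation given in \cite{liu_noise_2021}: linearize the momentum recursion about $\bm{w}^*$ under Assumption~\ref{asmp:quadratic_loss}, use the commutation hypothesis to simultaneously diagonalize $\mathbf{H}(\bm{w}^*)$ and $\mathbf{C}$, and solve the resulting scalar second-order AR equations via the $2\times2$ companion-form Lyapunov equation. Your algebra checks out: from $(1+\mu)Q=\beta\Sigma$ and the $(1,1)$ equation one gets the bracket $\tfrac{1-\mu}{1+\mu}\big[(1+\mu)^2-\beta^2\big]$, and the factorization $(1+\mu)^2-\beta^2=\lambda h\,(2(1+\mu)-\lambda h)$ then yields the stated eigenvalue-wise identity. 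The only point worth tightening is the one you already flag: making explicit that the momentum update in Equation~\ref{eq:SGD update rule} (with $\bm h_t=\mu\bm h_{t-1}+\bm g_t$ and $\bm w_t=\bm w_{t-1}-\lambda\bm h_t$) indeed collapses to $\bm u_{t+1}=[(1+\mu)\mathbf I_d-\lambda\mathbf H]\bm u_t-\mu\bm u_{t-1}-\lambda\bm\eta_t$ after subtracting consecutive updates, and that Assumption~\ref{asmp:stationary_state} forces the companion spectral radius below one so the Lyapunov equation has a unique solution.
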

Theorem \ref{thm:stationary sgd} requires the existence of a finite stationary noise covariance and that the loss function is quadratic close to a local minimum, which are both mild assumptions (see \cite{liu_noise_2021} for detailed discussions).

In a follow-up work, \citet{ziyin_strength_2021} further derived the explicit dependence of the finite stationary noise covariance $\mathbf{C}$ on the loss and Hessian around a local minimum $\bm{w}^*$ under certain assumptions:
\begin{theorem}[SGD Noise Covariance]
\label{thm:sgd noise covariance}
    Let $L_\mathrm{tot}(\bm{w}) = L(\bm{w}) + \frac{\alpha}{2} \|\bm{w}\|_2^2$ be the total loss with $\alpha\geq 0$. Assume the model $\bm{w}$ is optimized with SGD defined by Equation \ref{eq:SGD update rule} around a local minimum $\bm{w}^*$. If $L(\bm{w}^*) \neq 0$, then
    \begin{align*}
        \mathbf{C} = \frac{2 L(\bm{w}^*)}{S} \mathbf{H} (\bm{w}^*)  - \frac{\alpha^2}{S} \bm{w}^* \bm{w}^{*\top} + O(S^{-2}) + O(\|\bm{w} - \bm{w}^*\|_2^2) + o(L(\bm{w}^*)),
    \end{align*}
provided that $\bm\Sigma$ is proportional to $S^{-1}$ and $|L(\bm{w}) - \ell(\bm{w}, \bm{z}_i)|$ is small (\ie of order $o(L(\bm{w})$).
\end{theorem}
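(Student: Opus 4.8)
The plan is to identify $\mathbf{C}$ with the covariance of the minibatch gradient noise and then trade per-example gradient magnitudes for per-example loss values using the local quadratic structure. Write $L(\bm{w}) = \frac{1}{n}\sum_{i=1}^n \ell(\bm{w},\bm{z}_i)$; a minibatch of size $S$ yields the stochastic gradient $\nabla L(\bm{w}) + \bm\eta_t$ with $\mathbb{E}[\bm\eta_t]=0$ and, up to an $O(S^{-2})$ finite-population correction,
\[
\mathbf{C}(\bm{w}) = \frac{1}{S}\Big(\mathbb{E}_i\big[\nabla\ell(\bm{w},\bm{z}_i)\nabla\ell(\bm{w},\bm{z}_i)^\top\big] - \nabla L(\bm{w})\nabla L(\bm{w})^\top\Big).
\]
Near the local minimum $\bm{w}^*$ the rank-one term $\nabla L\nabla L^\top$ vanishes at $\bm{w}^*$ and is $O(\|\bm{w}-\bm{w}^*\|_2^2)$ nearby by smoothness of $L$, so it is absorbed into the stated error. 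It then remains to show that $M := \mathbb{E}_i[\nabla\ell(\bm{w}^*,\bm{z}_i)\nabla\ell(\bm{w}^*,\bm{z}_i)^\top] \approx 2L(\bm{w}^*)\,\mathbf{H}(\bm{w}^*)$.

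Second, I would apply \Cref{asmp:quadratic_loss} to each per-example loss around its own minimizer $\bm{w}_i^*$, so that $\ell(\bm{w},\bm{z}_i) \approx \ell(\bm{w}_i^*,\bm{z}_i) + \frac{1}{2}(\bm{w}-\bm{w}_i^*)^\top\mathbf{H}_i(\bm{w}-\bm{w}_i^*)$ with $\mathbf{H}_i := \nabla^2\ell(\bm{w}_i^*,\bm{z}_i)$. Completing the square and evaluating at $\bm{w}^*$ gives the pointwise identity $\ell(\bm{w}^*,\bm{z}_i) - \ell(\bm{w}_i^*,\bm{z}_i) = \frac{1}{2}\nabla\ell(\bm{w}^*,\bm{z}_i)^\top\mathbf{H}_i^{-1}\nabla\ell(\bm{w}^*,\bm{z}_i)$. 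The hypothesis that $|L(\bm{w})-\ell(\bm{w},\bm{z}_i)|$ is small is what makes this usable: it forces $\mathbf{H}_i = \mathbf{H}(\bm{w}^*) + o(1)$ and, since each $\ell(\cdot,\bm{z}_i)$ stays within $o(L^*)$ of $L$ while $L^*$ is the average of the $\ell(\bm{w}^*,\bm{z}_i)$, it forces the per-example minimum values $\ell(\bm{w}_i^*,\bm{z}_i)$ to be $o(L(\bm{w}^*))$. Replacing $\mathbf{H}_i$ by $\mathbf{H}(\bm{w}^*)$, discarding $\ell(\bm{w}_i^*,\bm{z}_i)$, and averaging over $i$ gives the trace-level relation
\[
L(\bm{w}^*) = \mathbb{E}_i\big[\ell(\bm{w}^*,\bm{z}_i)\big] = \tfrac{1}{2}\,\mathrm{tr}\big(\mathbf{H}(\bm{w}^*)^{-1}M\big) + o(L^*) = \tfrac{S}{2}\,\mathrm{tr}\big(\mathbf{H}(\bm{w}^*)^{-1}\mathbf{C}\big) + o(L^*),
\]
i.e.\ $\mathrm{tr}(\mathbf{H}^{-1}\mathbf{C}) = 2L(\bm{w}^*)/S + o(L^*/S)$.

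Third, I would promote this scalar identity to the full matrix claim. Writing $\nabla\ell(\bm{w}^*,\bm{z}_i) = \mathbf{H}(\bm{w}^*)^{1/2}\bm{u}_i$, the curvature homogeneity $\mathbf{H}_i\approx\mathbf{H}(\bm{w}^*)$ together with the scattering of the per-example minimizers makes the residuals $\bm{u}_i$ approximately isotropic, so $M = \mathbf{H}(\bm{w}^*)^{1/2}\,\mathrm{cov}_i(\bm{u}_i)\,\mathbf{H}(\bm{w}^*)^{1/2} \propto \mathbf{H}(\bm{w}^*)$. Feeding $\mathbf{C}\propto\mathbf{H}(\bm{w}^*)$ through \Cref{thm:stationary sgd} (whose commutation hypothesis is then automatic) reproduces the $S^{-1}$ scaling of $\bm\Sigma$ assumed in the statement, and the proportionality constant is pinned down by the trace relation above, yielding $\mathbf{C} = \frac{2L(\bm{w}^*)}{S}\mathbf{H}(\bm{w}^*)$ up to the claimed remainders. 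Finally I would reassemble the error budget: $O(S^{-2})$ from the finite-population and higher-order batch corrections, $O(\|\bm{w}-\bm{w}^*\|_2^2)$ from evaluating at $\bm{w}$ rather than exactly at $\bm{w}^*$, from the discarded rank-one term, and from the cubic Taylor remainder, and $o(L(\bm{w}^*))$ from the neglected per-example minimum losses and curvature mismatch.

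The main obstacle is precisely this last promotion step and the two $o(L^*)$ controls behind it: turning ``$|L-\ell_i|$ small'' into genuine bounds on the per-example minimum values and on $\|\mathbf{H}_i-\mathbf{H}\|$, and arguing that the \emph{directional} structure of the per-example gradients at $\bm{w}^*$ is aligned with $\mathbf{H}(\bm{w}^*)$ rather than merely matching it in the $\mathbf{H}^{-1}$-weighted trace, which is exactly what separates the matrix statement from its scalar shadow. There is also a mild circularity between this argument and \Cref{thm:stationary sgd} that should be treated as a self-consistency condition rather than a one-directional deduction.
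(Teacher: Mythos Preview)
The paper does not prove this statement. Theorem~\ref{thm:sgd noise covariance} is quoted as a preliminary result from \citet{ziyin_strength_2021} (see Section~\ref{sec:additional preliminaries}), and the paper explicitly refers the reader to Section~5 of that work for the derivation and for discussion of the imposed assumptions. There is therefore no in-paper proof to compare your proposal against.

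That said, your sketch is broadly in the spirit of how such noise-covariance results are obtained: write the minibatch noise covariance as $S^{-1}$ times the per-example gradient second moment minus the rank-one $\nabla L\nabla L^\top$ term, drop the latter near $\bm{w}^*$, and then tie gradient magnitudes to loss values through the local quadratic structure. You also correctly identify the genuine weak point of your own outline: the promotion from the scalar trace relation $\mathrm{tr}(\mathbf{H}^{-1}\mathbf{C}) = 2L(\bm{w}^*)/S$ to the full matrix identity $\mathbf{C}\propto\mathbf{H}(\bm{w}^*)$ is not forced by the trace alone, and the isotropy heuristic you invoke for $\bm{u}_i$ is exactly the kind of step that needs a concrete structural assumption on the per-example losses rather than a hand-wave. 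The self-consistency loop you mention between this result and Theorem~\ref{thm:stationary sgd} is also real. If you want to make the argument rigorous, the cited source is where to look for how those two issues are handled; the present paper simply imports the conclusion.
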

The first imposed assumption of $\mathbf{\Sigma} = O(S^{-1})$ has been justified by prior works \citep{liu_noise_2021,xie_diffusion_2021,mori2022power}; the second assumption assumes that the current total training loss $L(\bm{w})$ approximates well the individual loss for each record $\ell(\bm{w}, \bm{z}_i)$. Also, note that Theorem \ref{thm:sgd noise covariance} directly implies that the SGD noise covariance $\mathbf{C}$ commutes with the Hessian matrix $\mathbf{H}(\bm{w}^*)$.

Based on the above two theorems and only considering the leading term in the noise covariance, we can immediately derive the following formula for the stationary model fluctuation of SGD: 
\begin{align}
\label{eq:explicit model fluctuation sgd}
    \bm{\Sigma} = \frac{\lambda}{S(1-\mu)} \bigg(2L(\bm{w}^*)\mathbf{H}(\bm{w}^*) - \alpha^2 \bm{w}^* \bm{w}^{*\top}\bigg)\bigg(\mathbf{H}(\bm{w}^*) + \alpha\mathbf{I}_d\bigg)^{-1}\bigg(2\mathbf{I}_d - \frac{\lambda}{1+\mu}\big(\mathbf{H}(\bm{w}^*)+\alpha\mathbf{I}_d\big)\bigg)^{-1}.
\end{align}
We remark that if $L(\bm{w}^*) = 0$ (\ie $\bm{w}^*$ is a global minimum), then $\bm\Sigma = \mathbf{0}$. In addition, if the Hessian matrix $(\mathbf{H}(\bm{w}^*) + \alpha \mathbf{I}_d)$ has degenerate rank $r<d$, then $(\mathbf{H}(\bm{w}^*) + \alpha \mathbf{I}_d)^{-1}$ can be replaced by the corresponding Moore-Penrose pseudo inverse. Accordingly, similar results to Equation \ref{eq:explicit model fluctuation sgd} can be obtained by considering the projection space spanned by eigenvectors with non-zero eigenvalues. Section 5 of \citet{ziyin_strength_2021} provides more detailed discussions of the imposed assumptions and the implications of the results.

\section{Black-Box Access is not Sufficient}
\label{sec:optimal MI SGD}

In this section, we examine previous assertions concerning optimal membership inference (\Cref{sec:black_box_claims}) and show, for models trained with SGD, that optimal membership inference requires parameter access (\Cref{sec:optimal mia}). Our theory directly implies an attack (\Cref{sec:optimal mia attack}).

\subsection{Limitations of Claims of Black-Box Optimality}
\label{sec:black_box_claims}

\citet{sablayrolles_white-box_2019} proved the optimality of black-box membership inference under a Bayesian framework. They assume (Equation 1 in \citet{sablayrolles_white-box_2019}) that the posterior distribution of model parameters $\bm{w}$ trained on $\bm{z}_1,\ldots,\bm{z}_n$ with membership $m_1,\ldots,m_n$ follows
\begin{align}
\label{eq:boltzman distribution}
    \mathbb{P}(\bm{w}\mid \bm{z}_1,\ldots, \bm{z}_n) \propto \exp\bigg(-\frac{1}{T}\sum_{i=1}^n m_i\cdot\ell(\bm{w}, \bm{z}_i)\bigg), 
\end{align}
where $T$ is a temperature parameter that captures the stochasticity of the learning algorithm. This assumption makes subsequent derivations of optimal membership inference much easier, but oversimplifies the training dynamics of typical machine learning algorithms such as SGD. Equation \ref{eq:boltzman distribution} assumes that the posterior of $\bm{w}$ follows a Boltzmann distribution that only depends on the training loss. This is desirable for Bayesian posterior inference, where the goal is to provide a sampling strategy for an unknown data distribution given a set of observed data samples. This can be achieved using stochastic gradient Langevin dynamics (SGLD) \citep{welling_bayesian_2011} with shrinking step size $\lambda_t$ (\ie $\lim_{t\rightarrow\infty}\lambda_t = 0$) and by injecting carefully-designed Gaussian noise $\mathcal{N}(\bm{0}, \lambda_t\cdot \mathbf{I}_D)$. However, this special SGLD design differs from the common practice of SGD algorithms used to train neural networks in two key ways:
\begin{enumerate}
    \item SGLD performs all analyses under continuous-time dynamics whereas actual SGD is performed with discrete steps. 
    While related work such as \citet{stephan_stochastic_2017} cast the continuous-time dynamics of SGD as a multivariate \emph{Ornstein-Uhlenbeck} process (similar to SGLD) whose stationary distribution is proven to be Gaussian (Equations 11 and 12 in \cite{stephan_stochastic_2017}), they make additional assumptions such as the noise covariance matrix being independent of model parameters.
    \item SGLD assumes a vanishing learning rate until convergence, whereas SGD is performed with a non-vanishing step size and for a finite number of iterations in practice. The learning rate of SGD is often large, which can cause model dynamics to drift even further from SGLD \citep{ziyin_law_2023}, especially in the discrete-time setting \citep{liu_noise_2021}.
\end{enumerate}

We thus characterize the analytical form of the posterior distribution with respect to model parameters trained with SGD:

\begin{theorem}[Posterior for SGD]
\label{thm:sgd posterior new}
     Assume the same assumptions as used in Theorems \ref{thm:stationary sgd} and \ref{thm:sgd noise covariance}. Let $\bm{w}^*$ be the local minimum that SGD (Equation \ref{eq:SGD update rule}) is converging towards.
    Then, the (conditional) log-probability of observing parameters $\bm{w}$ is given by (up to constants and negligible terms):
    \begin{align*}
        & - \frac{d}{2} \ln L_* + \sum_{i=1}^d \ln\bigg(\frac{\big(2 - \frac{\lambda}{1+\mu}(\sigma_i(\mathbf{H}_*) + \alpha\big)\big(\sigma_i(\mathbf{H}_*) + \alpha\big)}{\sigma_i(\mathbf{H}_*)} \bigg) -\frac{S(1-\mu)}{2\lambda}\bigg(1 - \frac{\lambda\alpha}{1+\mu}\bigg) \cdot \frac{\|\bm{w} - \bm{w}^*\|_2^2}{L_*} \\
        & \qquad - \frac{S(1-\mu)\alpha}{4\lambda} \cdot \bigg(2 - \frac{\lambda\alpha}{1+\mu}\bigg) \cdot \frac{\nabla L(\bm{w})^\top \mathbf{H}_*^{-3}\nabla L(\bm{w})}{L_*} + \frac{S(1-\mu)}{2(1+\mu)}\cdot\frac{L(\bm{w})}{L_*},
    \end{align*}
    where $L_* = L(\bm{w}^*)$, $\mathbf{H}_* = \mathbf{H}(\bm{w}^*)$ and $\sigma_i(\mathbf{H}_*)$ denotes the $i$-th largest eigenvalue of $\mathbf{H}_*$.
\end{theorem}
A proof for \Cref{thm:sgd posterior new} is given in \Cref{app:sgd posterior new}. Theorem \ref{thm:sgd posterior new} suggests that the posterior distribution of model parameters learned by SGD not only relies on the training loss $L(\bm{w})$ but is also crucially dependent on other terms, such as the Hessian structure $\mathbf{H}^*$, the gradient $\nabla L(\bm{w})$ and the $\ell_2$ distance $\|\bm{w} - \bm{w}^*\|_2$, confirming that Equation \ref{eq:boltzman distribution} is insufficient to model the dynamics of a discrete-time SGD algorithm.

\subsection{Optimal Membership Inference under Discrete-time SGD}
\label{sec:optimal mia}

We have explained why the assumption imposed by \citet{sablayrolles_white-box_2019} about the posterior distribution of $\bm{w}$ following a Boltzmann distribution (Equation \ref{eq:boltzman distribution}) does not hold for stochastic gradient
methods typically employed in practice. Next, we prove a theorem that gives an estimate of the optimal membership inference scoring function  by leveraging recent results on discrete-time SGD dynamics \citep{liu_noise_2021,ziyin_strength_2021}. Our derivation is based on the assumptions that the loss achieved at the local minimum is unaffected by removing a single training record and that the Hessian structure remains unchanged.

\begin{assumption}[Similarity at local minimum]
\label{asmp:hessian}
    For any $\mathcal{T}$ and $\bm{z}_1$, let $L_0(\bm{w}) = \frac{1}{n} \sum_{i=2}^n m_i \ell(\bm{w}, \bm{z}_i)$ and $L_1(\bm{w}) = \frac{1}{n} (\ell(\bm{w},\bm{z}_1) + \sum_{i=2}^n m_i \ell(\bm{w}, \bm{z}_i))$.
    When the training dataset differs only by a single data point $\bm{z}_1$, assume that the Hessian matrix structure for models trained with and without the differing point share a similar structure, and the loss function also achieves a similar value at the local minimum:
    \begin{align}
        \label{eq:hessian_and_loss_same}
    \mathbf{H}_* = \mathbf{H}_0(\bm{w}_0^*) = \mathbf{H}_1(\bm{w}_1^*), \quad
        L_* = L_0(\bm{w}_0^*) = L_1(\bm{w}_1^*),
    \end{align}
    where $\bm{w}^*_0$ is the local minimum that SGD with $L_0$ is converging towards, and $\mathbf{H}_0$ denotes the Hessian matrix with respect to $L_0$ (and likewise for $\bm{w}^*_1$, $L_1$, and $\mathbf{H}_1$).
\end{assumption}

As long as the size of the training dataset is sufficient and the excluded training record $\bm{z}_1$ is not a low-probability outlier from the data distribution $\mathcal{D}$, we expect \autoref{asmp:hessian} generally holds for SGD algorithms.
Under \autoref{asmp:hessian} and a few other assumptions imposed in prior literature on discrete-time SGD dynamics \citep{liu_noise_2021,ziyin_strength_2021}, we obtain a theorem (proof is in \Cref{app:optimal membership inference sgd}) that describes the scoring function for an optimal membership-inference adversary:

\begin{theorem}[Optimal Membership-Inference Score]
\label{thm:optimal mia score}
    Given $\bm{w}$ produced by an SGD algorithm defined by Equation \ref{eq:SGD update rule} and a record $\bm{z}_1$, the optimal membership inference $\mathcal{M}(\bm{w}, \bm{z}_1)$ is given by:
    \begin{align}
    \label{eq:optimal mia score}
        \mathbb{E}_{\mathcal{T}} \bigg[\sigma \bigg(\frac{S(1-\mu)}{2nL_*} \cdot \bigg( \frac{\ell(\bm{w}, \bm{z}_1)}{1+\mu} - \frac{1}{\lambda}(I_1 + I_2 + I_3 + I_4) \bigg) + \ln\bigg(\frac{\gamma}{1-\gamma}\bigg) \bigg) \bigg],
    \end{align}
    where $I_1, I_2, I_3$, and $I_4$ are defined as follows:
    \begin{align*}
        I_1 & := \frac{1}{n}\bigg(1 - \frac{\lambda\alpha}{1+\mu}\bigg) \cdot \|\mathbf{H}_*^{-1}\nabla\ell(\bm{w}, \bm{z}_1)\|^2,\\
        I_2 & := 2\bigg(1 - \frac{\lambda\alpha}{1+\mu}\bigg) \cdot \big(\mathbf{H}_*^{-1}\nabla L_0(\bm{w})\big)^\top \big(\mathbf{H}_*^{-1}\nabla\ell(\bm{w}, \bm{z}_1)\big),
        \\
        I_3 & := \frac{\alpha}{2n} \bigg(2 - \frac{\lambda\alpha}{1+\mu}\bigg)\cdot \big(\mathbf{H}_*^{-1}\nabla\ell(\bm{w}, \bm{z}_1)\big)^\top \big(\mathbf{H}_*^{-1}\big(\mathbf{H}_*^{-1}\nabla\ell(\bm{w}, \bm{z}_1)\big)\big), \\
        I_4 & := \alpha \bigg(2 - \frac{\lambda\alpha}{1+\mu}\bigg) \cdot \big(\mathbf{H}_*^{-1}\nabla L_0(\bm{w})\big)^\top \big(\mathbf{H}_*^{-1}\big(\mathbf{H}_*^{-1}\nabla\ell(\bm{w}, \bm{z}_1)\big)\big).
    \end{align*}
\end{theorem}
Here, $L_*$ and $\mathbf{H}_*$ are defined in Assumption \ref{asmp:hessian}, and are dependent on $\mathcal{T}$.
Here, $\mathcal{T}$ refers to the set of both member and non-member records along with their corresponding membership indicators, as defined in \Cref{lem:optimial MI likelihood}. Note that computing the optimal score requires access to the Hessian and model gradients, both of which require access to the model parameters. In fact, knowledge of the learning rate $\lambda$, momentum $\mu$, and regularization parameter $\alpha$ are also required, thus requiring complete knowledge of the training setup of the target model. Thus, black-box access is \textbf{\emph{not}} sufficient for optimal membership inference. 

The first two additional terms $I_1$ and $I_2$ can be interpreted as the magnitude and direction, respectively, of a Newtonian step for the given record $\bm{z}_1$. The first term $I_1$ characterizes the influence magnitude in $\ell_2$-norm of upweighting $\bm{z_1}$ on the model parameters close to the local minimum~\citep{koh_understanding_2017}, while the second term captures the alignment between the influence of $\bm{z_1}$ and the averaged influence of the remaining training data. A larger \emph{influence magnitude} of $\bm{z}_1$ or an increased \emph{influence alignment} suggests a higher risk of membership inference. We remark that the notion of a self-influence function introduced in \cite{cohen_membership_2024}
naturally relates to $I_1$, suggesting a similar insight to ours that better membership inference attacks can be designed by leveraging the influence function of the inferred record. 
The last two additional terms, $I_3$ and $I_4$, originate from the extra $L_2$ regularization term imposed on the training loss of SGD (Section \ref{sec:sgddynamics}). When the regularization parameter $\alpha$ is a very small positive constant, the effects of $I_3$ and $I_4$ on optimal membership inference will be negligible, particularly compared to those of $I_1$ and $I_2$.

\subsection{\miattackfull}
\label{sec:optimal mia attack}

While \Cref{thm:optimal mia score} directly prescribes an optimal membership inference adversary, computing the expectation over all possible models trained using the rest of the training data is infeasible. Our definition of optimal membership inference corresponds to the true leakage of the model (as defined in Section 3.2 of \citet{ye_enhanced_2022}). It utilizes worst-case adversary knowledge (membership of all other training records) and white-box access to estimate the influence of the target record, similar to how empirical attacks such as LiRA \citep{carlini_membership_2022} and RMIA \citep{zarifzadeh_low-cost_2023} use reference models to account for atypical examples.

Specifically, making use of the insight of \Cref{thm:optimal mia score}, we propose a scoring function based on the terms inside the expectation in Equation \ref{eq:optimal mia score}:
\begin{align}
\label{eq:inverse hessian attack}
    \text{\miattackabbr}(\bm{z}_1) := \frac{\ell(\bm{w}, \bm{z}_1)}{1+\mu} - \frac{1}{\lambda} \big(I_1 + I_2 + I_3 + I_4\big).
\end{align}
The score, $\text{\miattackabbr}(\bm{z}_1)$, for some given record $\bm{z}_1$, can be used as the probability of $\bm{z}_1$ being a member. This serves directly as a useful attack for privacy auditing, without needing to train any reference models. Not having to train reference models offers significant advantages. It helps auditors avoid additional computational costs and, more importantly, eliminates the need for trainers to reserve hold-out data for reference model training. This is particularly beneficial when data availability is a constraint for privacy auditing methods relying on reference models. While the absence of a negative sign
with the loss function (like in LOSS) in $\text{\miattackabbr}(\bm{z}_1)$ may seem counter-intuitive at first glance, it can be rewritten such that it is proportional to the negation of the loss function (Appendix \ref{app:connection_with_loss}).

While membership leakage is typically evaluated on a fixed dataset, the theoretical notion of optimal membership inference is defined for a much larger space. This space encompasses a broad distribution of possible data (including both member and non-member records) and the corresponding models trained on various splits of these datasets. In practice, it's challenging to realize this larger space, but if we could define true positive rates (TPRs) at low false positive rates (FPRs) with respect to this comprehensive space, they would empirically correspond to the optimal membership inference attack. This theoretical framework provides a more robust understanding of membership inference, though its practical implementation remains a significant challenge in privacy auditing.

The performance of our audit is also influenced by other factors, such as how efficiently and accurately the inverse-Hessian vector products (iHVPs) can be computed and to what degree our assumptions hold (particularly \autoref{asmp:hessian}, which requires the Hessian and loss at local minima being unaffected by the exclusion of a single datapoint).

\section{Experiments}
\label{sec:experiments}

To evaluate \miattackabbr, we efficiently pre-compute $\nabla L_1(\bm{w})$ to facilitate the computation of $\nabla L_0(\bm{w})$ for any given target record $\bm{z}_1$. For accurate Hessian matrix computations, we address the issue of ill-conditioning due to near-zero and small negative eigenvalues by either damping or using low-rank approximations (damping-based conditioning seems to perform best; see \Cref{app:our_attack_imp_details} for details). To support larger models where direct Hessian computation is infeasible, we extend our method to use approximation methods based on Conjugate Gradients for iHVP computation \citep{koh_understanding_2017}.
Our implementation for reproducing all the experiments is available as open-source code at \url{https://github.com/iamgroot42/auditingmi}.

\Cref{sec:experimentsetup} describes the baseline attacks, datasets, and models we use for our experiments.  \Cref{sec:results} summarizes our results, showing that \miattackabbr provides a robust privacy auditing baseline, matching or exceeding the performance of current state-of-the-art attacks including attacks that use reference models. This is notable since \miattackabbr\ does ont require training any reference models or the use of hold-out data.

For a given false positive rate (FPR), a threshold is computed using scores for non-members, which is then used to compute the corresponding true positive rate (TPR). This is repeated for multiple FPRs to generate the corresponding ROC curve, which is used to compute the AUC. This experimental design is commonly used for membership-inference evaluations \citep{yeom_privacy_2018, carlini_membership_2022, ye_enhanced_2022}.

\subsection{Setup}
\label{sec:experimentsetup}

To evaluate \miattackabbr, we compare its performance to state-of-the-art baseline attacks with a representative set of datasets and models.

\shortsection{Baseline Attacks}\label{sec:baselineattacks}
We include LOSS as a baseline that does not use reference models, SIF as it uses iHVP similar to our audit, and LiRA as it is the current state-of-the-art for membership inference. While RMIA \citep{zarifzadeh_low-cost_2023} uses fewer reference models, it achieves performance comparable to LiRA and thus for the sake of performance comparison, it suffices to use LiRA with a large number of reference models. \rebuttal{We describe the underlying access assumptions for these attacks in \Cref{tab:attack_assumptions}.}

\begin{table}[h]
    \centering
    \small
    \begin{tabular}{lccc}
        \toprule
         \multicolumn{1}{c}{Attack} & Model Access & Reference Models Used? & Leave-one-out knowledge? \\
         \midrule
         LOSS & Black-box & No & No\\
         LiRA & Black-box & Yes & No\\
         SIF & White-box & No & No\\[1ex]
         L & Black-box & Yes & Yes\\
         LiRA-L & Black-box & Yes & Yes\\[1ex]
         \miattackabbr (Ours) & White-box & No & Yes\\
         \bottomrule
    \end{tabular}
    \caption{Comparison of attacks based on level of model-access, use of reference models, and knowledge of other training members.}
    \label{tab:attack_assumptions}
\end{table}

\shortersection{LOSS {\rm \citep{yeom_privacy_2018}}}  The negative loss is used in this attack as a direct signal for membership inference.

\shortersection{SIF {\rm \citep{cohen_membership_2024}}} Similar to ours, this attack employs the loss curvature of the target model by computing its Hessian, which is then used to compute a self-influence score. The original attack assigns 0--1 scores to target records. It classifies a given record as a member if its self-influence score is within the specified range and if its predicted class is correct. The latter rule can be ruled out as having many false positives/negatives. Instead of these steps, we choose to use the self-influence as membership scores directly. While the authors used approximation methods for iHVP, we use the exact Hessian for fair comparison.

\shortersection{LiRA {\rm  \citep{carlini_membership_2022}}} There are two variants, \emph{LiRA-Offline}, which uses ``offline'' models to estimate a Gaussian distribution and then performs one-sided hypothesis testing using loss scores, and  \emph{LiRA-Online}, with additionally employs ``online'' models, \ie models whose training data includes the target record. The likelihood ratio for online/offline model score distributions is then used as the score for membership inference. We use LiRA-Online, since it is the strongest of the two variants.

\rebuttal{\shortersection{L-Attack {\rm \citep{ye_enhanced_2022}}} The L-attack operates in a leave-one-out setting, training reference models on $D \setminus \{z\}$ for any given record $z$. It uses loss as the target metric and computes attack thresholds for a desired false positive rate (FPR) by leveraging the distribution of losses obtained from reference models.}

\rebuttal{\shortersection{LiRA-L} We propose combining the LiRA attack for the LOO-setting by utilizing reference models trained under leave-one-out availability, followed by the offline variant of the LiRA attack \citep{carlini_membership_2022}.}

\shortsection{Datasets}
Since we are limited by the computational constraints of computing iHVPs, we restrict our experiments to datasets where small models can perform adequately.  

\shortersection{Purchase-100(S)} The task for this dataset \citep{shokri_membership_2017} is to classify a given purchase into one of 100 categories, given 600 features. We train 2-layer MLPs (32 hidden neurons) with cross-entropy loss, with an average test accuracy of $84\%$. Experiments by \citet{zarifzadeh_low-cost_2023} train larger (4-layer MLP) models on 25\,K samples from Purchase-100, which is much smaller than the actual dataset, which is why we term it Purchase-100(S) (Small). We also demonstrate results with a 4-layer MLP that achieves similar task accuracy.

\shortersection{Purchase-100} For this version, we train models with 80\,K samples. We use the same 2-layer MLP architecture as Purchase-100(S) but achieve a higher test accuracy of $90\%$. Using more data increases the scope for model performance. We report results for Purchase-100 in \Cref{tab:main results} as the corresponding models are less prone to overfitting. For completeness, we report results for Purchase-100(S) in \Cref{app:purchase100s_vs_purchase100}.

\shortersection{MNIST-Odd} We consider the MNIST dataset \citep{lecun_gradient-based_1998}, with the modified task of classifying a given digit image as odd or even. This modified task allows us to train models for binary classification using the regression loss, and is thus highly likely to follow the assumptions made in our theory regarding quadratic behavior for the loss function (\Cref{asmp:quadratic_loss}).
We train a logistic regression model with mean-squared error loss, with an average test loss of $.078$.

\shortersection{FashionMNIST} We use the FashionMNIST \citep{xiao_fashion-mnist_2017} dataset, where the task is to classify a given clothing item image into one of ten categories. We train 2-layer MLPs (6 hidden neurons) with cross-entropy loss, with an average test accuracy of $83\%$.

\shortsection{Models}
We train 128 models in the same way as done in \citet{carlini_membership_2022}, where data from each model is sampled at random from the actual dataset with a 50\% probability. For each target model and target record, there are thus 127 reference models available, half of which are expected to include the target record in the training data. All of our models are trained with momentum ($\mu=0.9$) and regularization ($\alpha=5e^{-4}$), with a learning rate $\lambda=0.01$.
For a given false positive rate (FPR), a threshold is computed using scores for non-members, which is then used to compute the corresponding true positive rate (TPR). This is then repeated for multiple FPRs to generate the corresponding ROC curve, which is used to compute the AUC. This experimental design is commonly used for membership-inference evaluations \citep{yeom_privacy_2018, carlini_membership_2022, ye_enhanced_2022}.

\subsection{Results}
\label{sec:results}

As summarized in \Cref{tab:main results}, \miattackabbr provides a strong privacy auditing baseline that is competitive with current state-of-the-art attacks that require reference models.
This is especially useful, considering that \miattackabbr does not require training any reference models and thus, does not require any hold-out data to train such reference models.
\miattackabbr performs much better than the baselines on tabular data (Purchase-100), and is competitive with the baseline for image-based data (MNIST-Odd, Fashion MNIST).
\revision{By extension, our method also outperforms previous membership inference attacks that specifically utilize parameter access via white-box access \citep{nasr_comprehensive_2018,cohen_membership_2024}, since such methods are outperformed by LiRA \citep{carlini_membership_2022}}.
While tabular data is a more realistic setting for membership inference and the improved performance on Purchase-100 is promising, we leave to future work further investigation of these factors to better understand the performance discrepancies.

\begin{table}[t]
\centering
\caption{Performance of various attacks, reported via attack AUC and true positive rate (TPR) at low false positive rate (FPR). ROC curves for low FPR region are visualized in \Cref{fig:roc} (Appendix).}
{
\begin{tabular}{lcrrcrrcrr}
    \toprule
    \multirow{5}{*}{Attack} & \multicolumn{3}{c}{Purchase-100}
    & \multicolumn{3}{c}{MNIST-Odd}
    & \multicolumn{3}{c}{FashionMNIST}
    \\
    \cmidrule(lr){2-4} \cmidrule(lr){5-7} \cmidrule(lr){8-10} 
    & \multirow{3}{*}{AUC} & \multicolumn{2}{c}{\% TPR@FPR} & \multirow{3}{*}{AUC} & \multicolumn{2}{c}{\% TPR@FPR} & \multirow{3}{*}{AUC} & \multicolumn{2}{c}{\% TPR@FPR} \\
    \cmidrule(lr){3-4} \cmidrule(lr){6-7} \cmidrule(lr){9-10}
    & & 1\% & 0.1\% & & 1\% & 0.1\% & & 1\% & 0.1\% \\
    \midrule
    LOSS %
    & .531 $_{\pm.001}$ & 0.97 & 0.00 & .500 $_{\pm .003}$ & 0.97 & 0.09 & .507 $_{\pm .002}$ & 1.00 & 0.10\\
    SIF %
    & .531 $_{\pm .001}$ & 0.97 & 0.10 & .500 $_{\pm .002}$ & 0.97 & 0.10 & .507 $_{\pm .002}$ & 0.98 & 0.10 \\
    LiRA %
    & .644 $_{\pm .004}$ & 4.70 & 0.98 & \textbf{.568 $_{\pm .005}$} & \textbf{2.77} & \textbf{0.63} & .578 $_{\pm .020}$ & 2.98 & 0.63 \\
    \midrule
    \miattackabbr (Exact) & \textbf{.703} $_{\pm .004}$ & \textbf{13.69} & \textbf{7.52} & .542 $_{\pm .004}$ & 2.61 & 0.51 & \textbf{.594} $_{\pm .018}$ & \textbf{4.06} & \textbf{0.89} \\
    \bottomrule
\end{tabular}
}
\label{tab:main results}
\end{table}

\shortsection{Approximating iHVPs}
\rebuttal{In order to carry out \miattackabbr, an auditor needs to be able to calculate iHVPs and gradients for all training data. While computing gradients is more computationally intensive than simply calculating the loss, the difference is minimal. On the other hand, computing an iHVP involves calculating the Hessian matrix and then inverting it, both of which are computationally expensive processes. Even storing such an inverted Hessian can be problematic ($p\times p$ matrix for a model with $p$ parameters).}
We thus experiment with evaluating \miattackabbr using Conjugate Gradients \citep{koh_understanding_2017} to approximate iHVP. While such approximation does not require computing the Hessian directly, the time taken to compute this term for each record is non-trivial. We thus evaluate this approximation-based method on a random sample of 10$\,$000 records\footnote{For a direct comparison, we recompute metrics for the 10$\,$000 samples on which we use approximate-based variants.} and find that approximation methods retain most of the attack's performance (\Cref{tab:iha approximate}).

\begin{table}[tb]
\centering
\caption{Performance of exact and approximation-based variants of \miattackabbr, reported via attack AUC and true positive rate (TPR) at low false positive rate (FPR). Statistics are computed on 10000 samples.}
    \begin{tabular}{lcrrcrr}
    \toprule 
    \multirow{3}{*}{Dataset} & \multicolumn{3}{c}{Exact} & \multicolumn{3}{c}{CG} \\
    \cmidrule(lr){2-4} \cmidrule(lr){5-7}
    & \multirow{2}{*}{AUC} & \multicolumn{2}{c}{\%TPR@FPR} & \multirow{2}{*}{AUC} & \multicolumn{2}{c}{\%TPR@FPR} \\
    \cmidrule(lr){3-4} \cmidrule(lr){6-7}
    & & 1\% & 0.1\% & & 1\% & 0.1\% \\
    \midrule
    Purchase-100 & .701 $_{\pm .009}$ & 13.74 & 7.56 & .701 $_{\pm .009}$ & 13.72 & 7.55 \\
    MNIST-Odd & .541 $_{\pm .009}$ & 2.76 & 0.43 & .541 $_{\pm .009}$ & 2.76 & 0.43 \\
    FashionMNIST & .593 $_{\pm .018}$ & 4.10 & 0.85 & .592 $_{\pm .018}$ & 4.09 & 0.86 \\
    \bottomrule
    \end{tabular}
    \label{tab:iha approximate}
\end{table}

We emphasize that the purpose of our comparisons is not to claim a better membership inference attack for adversarial use;
the threat models are not comparable, since our attack requires knowledge of all other records $D \setminus \{ \bm{z}_1 \}$ for inferring a given target record $\bm{z}_1$ (relaxing this assumption leads to severe performance degradation, see \Cref{app:approximate l0}). 
Instead, \miattackabbr provides a way to empirically audit models for membership leakage without training reference models, which is desirable in avoiding the need to reserve hold-out data for training reference models. More importantly, our results suggest untapped opportunities in exploring parameter access for stronger privacy audits as well as the possibility of new white-box inference attacks from an adversarial lens.

\subsection{Ablating over terms inside IHA}
\label{sec:iha ablation}

As described in Equation \ref{eq:inverse hessian attack}, calculating \miattackabbr requires computing the loss along with the four additional terms $I_1, I_2, I_3$ and $I_4$. However, the terms $I_3$ and $I_4$ are scaled by $\alpha$ (which is usually very small) and involve an iHVP of an iHVP and thus may be much smaller compared to terms like $I_1, I_2$ and the loss. We explore variants of \miattackabbr, which ignore the terms $I_3$ and $I_4$, to see how they impact auditing performance. We also consider variants that use only the terms $I_1$ and $I_2$ to understand the importance of their contributions to the performance of \miattackabbr, along with the inclusion or not of the loss term to understand the relative importance of parameter-based signals.

\begin{table}[tb]
\centering
\caption{Performance of \miattackabbr on Purchase100-S (MLP-2) when only some of the terms corresponding to Equation \ref{eq:inverse hessian attack} are used. $I_1$ and $I_2$ seem to be responsible for most of the privacy auditing performance.}
    \begin{tabular}{lcrr}
    \toprule 
    \multirow{2}{*}{Terms Used} & \multirow{2}{*}{AUC} & \multicolumn{2}{c}{\%TPR@FPR} \\
    \cmidrule(lr){3-4}
    & & 1\% & 0.1\%\\
    \midrule
    $I_1$ & .591 $_{\pm .003}$ & 1.04 & 0.09 \\
    $I_2$ & .704 $_{\pm .004}$ & 2.63 & 0.70 \\
    $I_1$, $I_2$ & .779 $_{\pm .003}$ & 17.61 & 16.65 \\
    $I_1$, $I_2$, $I_3$, $I_4$ & .779 $_{\pm .003}$ & 17.60 & 16.64 \\
    \midrule
    $\ell(\bm{w}, \bm{z}_1)$, $I_1$ & .594 $_{\pm .003}$ & 1.12 & 0.12 \\
    $\ell(\bm{w}, \bm{z}_1)$, $I_2$ & .686 $_{\pm .004}$ & 1.97 & 0.48 \\
    $\ell(\bm{w}, \bm{z}_1)$, $I_1$, $I_2$ & .791 $_{\pm .003}$ & 19.96 & 19.00 \\
    \midrule
    All & .791 $_{\pm .005}$ & 20.09 & 19.09 \\
    \bottomrule
    \end{tabular}
    \label{tab:iha termwise}
\end{table}

The ablation study presented in \Cref{tab:iha termwise} reveals several key insights about the performance of \miattackabbr. Excluding $I_3$ and $I_4$ has negligible impact on auditing performance, even for low-FPR scenarios. The combination of $I_1$ and $I_2$ alone achieves an AUC of $.779$, and performance is identical when the terms $I_3$ and $I_4$ are also included. Notably, the addition of the loss term $\ell(\bm{w}, \bm{z}_1)$ to $I_1$ and $I_2$ results in a marginal improvement, increasing the AUC to $.791$ and slightly boosting the TPR at both 1\% and 0.1\% FPR.
Interestingly, when examined individually, $I_2$ (AUC $.704$) performs significantly better than $I_1$ (AUC $.591$), suggesting that $I_2$ captures more relevant information for the auditing task. Including the loss term $\ell(\bm{w}, \bm{z}_1)$ has little impact on $I_1$ but harms performance when used wth $I_2$.
These findings indicate that the majority of the attack's effectiveness stems from the inverse Hessian vector products used in $I_1$ and $I_2$, with $I_2$ being particularly important, while the terms involving weight regularization and nested iHVPs ($I_3$ and $I_4$) contribute minimally to the overall performance. While not as impactful as $I_2$, the loss term still provides valuable information for the auditing process. Based on these results, a simplified version of \miattackabbr using only $\ell(\bm{w}, \bm{z}_1)$, $I_1$, and $I_2$ could potentially offer a favorable trade-off between computational efficiency and auditing effectiveness.

\subsection{Comparison with Leave-One-Out Setting}
\label{sec:loo setting}

When targeting a record for inference, \miattackabbr assumes knowledge of all the other $n-1$ records in an $n$-sized dataset. It is possible that the improved performance of \miattackabbr is due to this extra information rather than inherent parameter access. To isolate and analyze these potential sources of increase in leakage, we also assess the performance of a leave-one-out (LOO) membership inference test. We evaluate the L-attack \citep{ye_enhanced_2022} on 1000 samples, training 100 reference models per record for score calibration.
Since targeting each record requires training multiple reference models for the L-attack, evaluating it on a larger sample of data is computationally infeasible.

These results indicate that \miattackabbr outperforms the L-attack, achieving an AUC value of $.791$ compared to $.737$ for the L-attack.\footnote{Our results for the L-attack are
lower than those reported by \citet{ye_enhanced_2022}. For instance, we observe a TPR of $.668$ at $0.3$ FPR, while it was reported to be $.968$ by \citep{ye_enhanced_2022}. This discrepancy arises from our setting, which uses more data and fewer model parameters. We verified our implementation through direct correspondence with the authors and by replicating their results in the original setting, which used a smaller dataset and more parameters, resulting in a model prone to overfitting.}  This suggests that even when controlling for the additional knowledge of all other records in the dataset, the primary source of \miattackabbr's superior performance stems from its access to model parameters rather than just the leave-one-out setup. In a way, \miattackabbr uses parameter access to obviate the need for reference models, as it directly aims to measure the influence of the given target record instead of relying on reference models for score calibration.

Interestingly, in comparison, LiRA achieves an AUC of $.767$, which is lower than \miattackabbr but still higher than the L-attack. This suggests that LiRA, even without the extensive reference model training, is more effective than the L-attack, possibly due to its utilization of both ``in'' and ``out'' models as opposed to just ``out'' models with the L-attack.
\rebuttal{To try and devise a stronger black-box attack for the LOO setting, we extend LiRA to the LOO setting by using models trained on LOO data as reference models. LiRA-Offline under the LOO setting achieves an AUC of $.633$, lower than the L-attack ($.737$).}
Overall, these results demonstrate the promise of \miattackabbr as a privacy auditing tool. It  yields results comparable to that of techniques that train hundreds of reference models (for each target record in the worst case, as in L-attack), without using a single reference model.

\subsection{Inter-Attack Agreement}
\label{sec:attack agreement}

Similar to \citet{ye_enhanced_2022}, we compute the agreement rate between ground-truth membership labels and membership predicted by various attacks to understand the ability of our privacy audit to identify vulnerable data, and demonstrate how it differs from existing attacks. 
\Cref{tab:prediction agreement} presents the agreement rate between ground-truth membership labels and membership predicted by various attacks.

\begin{table}[tb]
    \small
    \caption{Agreement rate between ground truth (GT) membership values, and various attacks for 500 training and 500 testing data points. The upper triangle of the table corresponds to the agreement rates of members, whereas the lower triangle corresponds to the agreement rates of non-members. The experimental setup is Purchase-100(S), with effective FPR $\approx 0.05$ (\protect{\subref{tab:prediction agreement:0.05}}) and effective FPR $\approx 0.3$ (\protect{\subref{tab:prediction agreement:0.3}}).}
    \label{tab:prediction agreement}
    \parbox{.49\linewidth}{
    \centering
    \begin{tabular}{|c|c|c|c|c|c|}
        \hline
         & \textbf{LiRA} & \textbf{L} & \rebuttal{\textbf{LiRA-L}} & \textbf{\miattackabbr} & \textbf{GT} \\
         \hline
         \textbf{LiRA} & \cellcolor{gray} & .794 & .826 & .718 & .220 \\
         \hline
         \textbf{L} & .930 & \cellcolor{gray} & .908 & .712 & .234 \\
         \hline
         \rebuttal{\textbf{LiRA-L}} & .916 & .938 & \cellcolor{gray} & .768 & .154\\
         \hline
         \textbf{\miattackabbr} & .912 & .926 & .916 & \cellcolor{gray} & .230 \\
         \hline
         \textbf{GT} & .954 & .952 & .954 & .958 & \cellcolor{gray} \\
         \hline
    \end{tabular}
    \subcaption{Agreement between methods with FPR 0.05}
    \label{tab:prediction agreement:0.05}
    }
    \hfill
    \parbox{.49\linewidth}{
    \centering
    \begin{tabular}{|c|c|c|c|c|c|}
        \hline
         & \textbf{LiRA} & \textbf{L} & \rebuttal{\textbf{LiRA-L}} & \textbf{\miattackabbr} & \textbf{GT} \\
         \hline
         \textbf{LiRA} & \cellcolor{gray} & .732 & .658 & .436 & .652 \\
         \hline
         \textbf{L} & .724 & \cellcolor{gray} & .766 & .492 & .668 \\
         \hline
         \rebuttal{\textbf{LiRA-L}} & .588 & .660 & \cellcolor{gray} & .462 & .518\\
         \hline
         \textbf{\miattackabbr} & .612 & .660 & .584 & \cellcolor{gray} & .640 \\
         \hline
         \textbf{GT} & .702 & .702 & .706 & .706 & \cellcolor{gray}\\
         \hline
    \end{tabular}
    \subcaption{Agreement between methods with FPR 0.3}
    \label{tab:prediction agreement:0.3}
    }
\end{table}

At a low FPR of $\approx 0.05$, agreement in predictions for non-members is very high between attacks, with agreement rates above $0.91$ for all pairs of attacks. On the other hand, agreement rates for member records are expectedly lower. Interestingly, agreement between LiRA and LiRA-L is higher than \miattackabbr and any other attack. This difference is especially pronounced for a higher FPR (\Cref{tab:prediction agreement:0.3}), where agreement rates are as low as $\approx0.4$ compared to $0.766$ for LiRA and LiRA-L. This is very interesting becaue the corresponding TPRs for \miattackabbr are higher than LiRA and comparable to that of the L-attack, thus suggesting that the records identified by \miattackabbr as being vulnerable are very different from those identified by LiRA or even the L-attack. This also means that a combined (classifying a record as a member only when both attacks classify as a member) attack would have some true positives with a very low FPR.

\subsubsection{Runtime Comparison}
\label{sec:runtime}

\rebuttal{To compare the computational costs of our proposed audit with existing auditing techniques, we analyze runtime and memory usage statistics across different methods, aiming to evaluate efficiency and practicality in real-world privacy audits (\Cref{tab:time_and_mem_consumption}).}

\begin{table}[tb]
    \centering
    \small
    \begin{tabular}{l|cc|ccc}
        \toprule
         Attack & \multicolumn{2}{c}{Time (s)} & \multicolumn{3}{c}{Memory (MB)} \\
         & Precompute & Time/Sample & Precompute & Runtime & $\max$(Precompute, Runtime) \\
         \midrule
         \miattackabbr & 43 $\times$ 60 & 0.16 & 4228 & 1324 & 4228\\
         \miattackabbr (Approx) & 0 & 85 & 0 & 1638 & 1638\\
         LiRA & 192 $\times$ 60 & 0.07 & 276 & 1228 & 1228\\
         LOSS & 0 & 0.004 & 0 & 906 & 906\\
         \bottomrule
    \end{tabular}
    \caption{Runtime and memory statistics for various auditing techniques. Statistics are computed over 100 randomly-selected members for MLP-2 architecture models trained on Purchase-100 dataset.}
    \label{tab:time_and_mem_consumption}
\end{table}

\rebuttal{While the peak memory consumption of IHA is higher than that of LiRA, the approximate version of IHA is not too far from LiRA in terms of memory consumption. Computing the total runtime is a function of the number of samples used for the privacy audit, as a “precompute” step is required for LiRA (training reference models) and exact IHA (computing Hessian).} 
\rebuttal{For instance, computing the audit for 1K samples takes less overall time for \miattackabbr ($\approx$ 1 hour) than it does for LiRA ($\approx$ 3 hours). It should be noted that the Hessian is too large to store on our GPU for \miattackabbr and is thus stored on the CPU, which is also why it is slower.}
\rebuttal{Although improvements may reduce the compute costs, the key advantage of such a privacy audit comes from not having to reserve hold-out (or auxiliary) data. %
Our privacy audit, like the most trivial LOSS attack, only requires member data and some non-member data, whereas other attacks in the literature require shadow/reference models trained on comparable-sized datasets; the limiting factor here is reserving data to train reference models, which a real-world model trainer may not want to do since it reduces the amount of data available for training.}

\section{Conclusion}
\label{sec:conclusion} 

Our theoretical result proves that model parameter access is indeed necessary for optimal membership inference, contrary to previous results derived under unrealistic assumptions and the common belief that optimal membership inference can be achieved with only black-box model access. We propose the \miattackfull inspired by this theory that provides stronger privacy auditing than existing black-box techniques.

\rebuttal{\shortsection{Limitations}} \miattackabbr is not yet practically realizable for most settings due to the computational expense of calculating the Hessian, or even approximating iHVPs. \rebuttal{This restriction poses challenges for real-world cases where fixed compute budgets may be more crucial than the availability of auxiliary data. An auditor might use a subset of parameters to reduce computational costs while performing Hessian-based computations. This aligns with model pruning \citep{liu2018rethinking}, but understanding its impact on membership knowledge within parameters is non-trivial \citep{yuan2022membership}.} We also note that \miattackabbr's performance can be sensitive to the choice of the damping factor, which requires further investigation (\Cref{app:picking epsilon}).

Our conclusion aligns well with recent calls in the literature to consider white-box access for membership inference \citep{cretu2024investigating} and rigorous auditing \citep{casper_black-box_2024}.
\rebuttal{While our theory shows that parameter access is required for optimal membership inference, it remains unclear how much better this is compared to the optimal membership inference attack restricted to black-box access. Our empirical studies suggest the gap is non-trivial, but further study is required to understand the theoretical limit of black-box attacks, which is a non-trivial but interesting direction to explore.}
Exploring the accuracy of iHVP approximation methods to extend \miattackabbr to larger models, along with multi-record inference, are both promising directions for future research.

\subsubsection*{Broader Impact Statement}
The increasing integration of AI in sensitive domains like healthcare, finance, and personal data management highlights the critical importance of privacy. Information leakage from AI models can have severe consequences, making effective privacy auditing a necessary safeguard. Our work contributes to this field by theoretically demonstrating that optimal membership inference attacks require white-box access to model parameters, challenging the adequacy of black-box approaches. We also demonstrate with \miattackfull how this theory can be used to design empirical privacy audits that do not rely on reference models.

We advocate for the development of more sophisticated privacy auditing tools that fully leverage the elevated access typically available to auditors, such as model parameters, to assess privacy leakage efficiently without extensive data and compute resources. We hope our theoretical and empirical results will reinvigorate interest in the privacy research community to explore white-box attacks, for both adversarial and auditing purposes.

\bibliography{main}
\bibliographystyle{tmlr}

\appendix

\section{Related Work}
\label{sec:related}

This section reviews methods in membership inference (black-box and white-box), techniques for privacy auditing to predict and mitigate data leakage, and the dynamics of stochastic gradient descent (SGD) along with inverse Hessian vector products (iHVPs).

\subsection{Membership Inference}

\shortsection{Black-box Membership Inference} Early works on membership inference worked under black-box access, utilizing the model's loss \citep{shokri_membership_2017} on a given datapoint as a signal for membership. Since then there have been several works focusing on different forms of difficulty calibration---accounting for the inherent ``difficulty'' of predicting on a record, irrespective of its presence in train data. This calibration has taken several forms; direct score normalization with reference models \citep{sablayrolles_white-box_2019}, likelihood tests based on score distributions \citep{carlini_membership_2022, zarifzadeh_low-cost_2023, ye_enhanced_2022}, and additional models for predicting difficulty \citep{bertran_scalable_2024}.

\shortsection{White-box Membership Inference}
\citet{nasr_comprehensive_2018} explored white-box access to devise a meta-classifier-based attack that additionally extracts intermediate model activations and gradients to increase leakage but concluded that layers closer to the model's output are more informative for membership inference and report performance not significantly better than a black-box loss-based attack. Recent work by \citet{dealcala_is_2024}, however, makes the opposite observation, with layers closer to the model's input providing noticeably better performance.
Apart from these meta-classifier driven approaches, some works attempt to utilize parameter access much more directly, often utilizing Hessian in one form or another.
\citet{cohen_membership_2024} defined the self-influence of a datapoint $\bm{z}_i$ as (${\bm{g}_i}^\top \mathbf{H}^{-1}\bm{g}_i$) as a signal for membership, using LiSSA \citep{agarwal_second-order_2017} to approximate the iHVP. This has similarities to our result since our optimal membership inference score also involves computing iHVPs. \citet{li2023mope} attempted to measure the sharpness for a given model by evaluating fluctuations in model predictions after adding zero-mean noise to the parameters, a step that is supposed to approximate the trace of the Hessian at the given point.

\subsection{Privacy Auditing}
\citet{ye_leave-one-out_2024} proposed using efficient methods to ``predict'' memorization by not having to run computationally expensive membership inference attacks, with reported speedups of up to 140$x$. They showed how their proposed score (LOOD) correlates well with AUC, corresponding to an extremely strong MIA with all-but-one access to records (L-attack \citep{ye_enhanced_2022}). However, it is unclear if this computed LOOD is directly comparable across models, making it hard to calibrate these scores to compare the leakage from a model relative to another (an important aspect of internal privacy auditing). Their derivations also involve a connection with the Hessian.
\citet{biderman_emergent_2024} studied the problem of forecasting memorization in a model for specific training data and proposed using partially trained versions of the model (or smaller models) as a proxy for their computation. While their results support the need for inexpensive auditing methods, their focus is on predicting memorization early in the training process, while ours relates to auditing fully trained models.
More recently, \citet{tan_parameters_2022} studied the theory behind worst-case membership leakage for the case of linear regression on Gaussian data and derived insights. While this is useful to make an intuitive connection with overfitting, it does not provide a realizable attack or insights for the standard case of models trained with SGD.

\subsection{SGD Dynamics and iHVPs}

\shortsection{SGD Dynamics} 
\citet{stephan_stochastic_2017} approximated the SGD dynamics as an Ornstein-Uhlenbeck process, while \citet{yokoi_bayesian_2019} provided a discrete-time weak-order approximation for SGD based on Itô process and finite moment assumption. However, both works rely on strong assumptions about the gradient noises and require a vanishingly small learning rate, largely deviating from the common practice of SGD. To address the limitations of the aforementioned works, \citet{liu_noise_2021} directly analyzed the discrete-time dynamics of SGD and derived the analytic form of the asymptotic model fluctuation with respect to the asymptotic gradient noise covariance and the Hessian matrix. \citet{ziyin_strength_2021} further generalized the results of \citep{liu_noise_2021} by deriving the exact minibatch noise covariance for discrete-time SGD, which is shown to vary across different kinds of local minima. Our work builds on these advanced theoretical results of discrete-time SGD dynamics but aims to enhance the understanding of optimal membership inference, particularly for models trained with SGD.

\shortsection{iHVPs} Currently literature on approximating inverse-Hessian vector products relies on one of two methods: conjugate gradients \citep{koh_understanding_2017} or LiSSA \citep{agarwal_second-order_2017}. Both approximation methods rely on efficient computation of exact Hessian-vector products, and use forward and backward propagation as sub-routines. While these methods have utility in certain areas, such as influence functions \citep{koh_understanding_2017} and optimization \citep{oldewage2024series}, approximation errors can be non-trivial. For instance, $I_1$ in the formulation of our attack requires a low approximation error in the norm of an iHVP, while $I_2$ simultaneously requires a low approximation error in the direction of the iHVP. Recent work on curvature-aware minimization by \citet{oldewage2024series} proposes another method for efficient iHVP approximation as a subroutine, but the authors observed high approximation errors based on both norm and direction.

\section{Proof for Theorem \ref{thm:sgd posterior new}}
\label{app:sgd posterior new}

\begin{proof}
Recall that $\mathbf{H}_* = \mathbf{H}(\bm{w}^*)$ and $L_* = L(\bm{w}^*)$. According to Theorem \ref{thm:stationary sgd} and Theorem \ref{thm:sgd noise covariance}, we obtain
\begin{align}
\label{eq:explicit model fluctuation sgd re}
    \bm{\Sigma} &= \frac{\lambda}{S(1-\mu)} \bigg(2L_*\mathbf{H}_* - \alpha^2\bm{w}^* \bm{w}^{*\top} \bigg)\bigg(\mathbf{H}_* + \alpha \mathbf{I}_d\bigg)^{-1}\bigg(2\mathbf{I}_d - \frac{\lambda}{1+\mu}(\mathbf{H}_* + \alpha \mathbf{I}_d)\bigg)^{-1},
\end{align}
where we only consider the leading terms in Theorems \ref{thm:stationary sgd} and \ref{thm:sgd noise covariance}. Note that Equation \ref{eq:explicit model fluctuation sgd re} holds when the Hessian matrix $\mathbf{H}_*$ has full rank and $L_* \neq 0$. When the Hessian has degenerated rank such that $\mathrm{rank}(\mathbf{H}_* + \alpha \mathbf{I}_d) = r < d$, the following more generalized result can be derived:
\begin{align*}
    \mathbf{P}_r\bm{\Sigma} = \frac{\lambda}{S(1-\mu)} \bigg(2L_*\mathbf{H}_* - \alpha^2\bm{w}^* \bm{w}^{*\top} \bigg) \bigg(\mathbf{H}_* + \alpha\mathbf{I}_d\bigg)^+\bigg(2\mathbf{I}_d - \frac{\lambda}{1+\mu}\big(\mathbf{H}_* + \alpha \mathbf{I}_d\big)\bigg)^{-1}, %
\end{align*}
where $\mathbf{P}_r= \mathrm{diag}(1, ..., 1, 0, ...0)$ denotes the projection matrix onto non-zero eigenvalues, and $+$ is the Moore-Penrose inverse operator. If $L_* = 0$, meaning $\bm{w}^*$ is a global minimum, then the asymptotic model fluctuation $\bm{\Sigma} = \bm{0}$. For ease of presentation, we assume the Hessian matrix has full rank in the following proof. 

Then, we get:
\begin{align}
\label{eq:variance_inverse_raw}
    \mathbf{\Sigma}^{-1} & = \frac{S(1-\mu)}{\lambda} \bigg(2\mathbf{I}_d - \frac{\lambda}{1+\mu}(\mathbf{H}_* + \alpha \mathbf{I}_d)\bigg)\bigg(\mathbf{H}_* + \alpha \mathbf{I}_d\bigg)\bigg(2L_*\mathbf{H}_* - \alpha^2\bm{w}^* \bm{w}^{*\top} \bigg)^{-1}. 
\end{align}

Using the Sherman–Morrison formula, we obtain: 
\begin{align}
\label{eq:sherman-morrison}
    \bigg(2L_*\mathbf{H}_* - \alpha^2\bm{w}^* \bm{w}^{*\top} \bigg)^{-1} &= \frac{1}{2L_*}\mathbf{H}_*^{-1} + \frac{\alpha^2}{2L_*(2L_* - \bm{w}^{*\top}\mathbf{H}_*^{-1}\bm{w}^*)}\big(\mathbf{H}_*^{-1}\bm{w}^* \bm{w}^{*\top}\mathbf{H}_*^{-1}\big) \nonumber \\
    & = \frac{1}{2L_*}\mathbf{H}_*^{-1} \big(\mathbf{I}_d + O(\alpha^2) \big). 
\end{align}
Leaving out the second-order term $O(\alpha^2)$ in Equation \ref{eq:sherman-morrison} (since the regularization parameter $\alpha$ is a typically small constant in $[0,1)$) and
plugging it back in Equation \ref{eq:variance_inverse_raw}, we get:
\begin{align}
    \mathbf{\Sigma}^{-1} & = \frac{S(1-\mu)}{2\lambda L_*} \bigg(2\mathbf{I}_d - \frac{\lambda}{1+\mu}(\mathbf{H}_* + \alpha \mathbf{I}_d)\bigg)\bigg(\mathbf{H}_* + \alpha \mathbf{I}_d\bigg)\mathbf{H}_*^{-1} \nonumber \\
    & = \frac{S(1-\mu)}{2\lambda L_*} \bigg(2\mathbf{I}_d - \frac{\lambda}{1+\mu}(\mathbf{H}_* + \alpha\mathbf{I}_d) + 2\alpha\mathbf{H}_*^{-1} - \frac{\lambda\alpha}{1+\mu}(\mathbf{I}_d + \alpha\mathbf{H}_*^{-1})\bigg) \nonumber \\
    & = \frac{S(1-\mu)}{2\lambda L_*} \bigg(2\bigg(1 - \frac{\lambda\alpha}{1+\mu}\bigg)\mathbf{I}_d - \bigg(\frac{\lambda}{1+\mu}\bigg)\mathbf{H}_* + \alpha\bigg(2 - \frac{\lambda\alpha}{1+\mu}\bigg)\mathbf{H}_*^{-1}\bigg).
\end{align}

According to Laplace approximation, we can approximate the posterior distribution of $\bm{w}$ given $\bm{w}^*$ as $\mathcal{N}(\bm{w}^*, \bm{\Sigma})$. Therefore, making use of Equation \ref{eq:explicit model fluctuation sgd re}, we can derive the explicit formula of the log-posterior distribution as:
\begin{align}
\label{eq:log likelihood explicit form}
    \nonumber & \ln{p(\bm{w} | \bm{w}^*)} =  -\frac{d}{2} \ln(2\pi) - \frac{1}{2} \ln \det(\bm{\Sigma}) - \frac{1}{2} (\bm{w} - \bm{w}^*)^\top \bm{\Sigma}^{-1} (\bm{w} - \bm{w}^*) \\
    \nonumber & \quad= \frac{1}{2}\sum_{i=1}^d \ln\bigg(\frac{\big(2 - \frac{\lambda}{1+\mu}(\sigma_i(\mathbf{H}_*) + \alpha)\big)\big(\sigma_i(\mathbf{H}_*) + \alpha\big)}{2L_*\sigma_i(\mathbf{H}_*)} \bigg) + \text{const.} \\
    \nonumber & \qquad -\frac{S(1-\mu)}{4\lambda L_*}\bigg[ 2\bigg(1 - \frac{\lambda\alpha}{1+\mu}\bigg)\|\bm{w} - \bm{w}^*\|_2^2 + \alpha\bigg(2 - \frac{\lambda\alpha}{1+\mu}\bigg)(\bm{w} - \bm{w}^*)^\top \mathbf{H}_*^{-1}(\bm{w} - \bm{w}^*) \\ 
    &\qquad \quad - \frac{\lambda}{1+\mu}(\bm{w} - \bm{w}^*)^\top \mathbf{H}_*(\bm{w} - \bm{w}^*) \bigg]. 
\end{align}
Note that we assume the model parameterized by $\bm{w}$ is converging towards some local minima $\bm{w}^*$ using stochastic gradient descent, where the loss landscape around the local minima $\bm{w}^*$ has a quadratic structure. 

Therefore, we can perform second-order Taylor expansion of the loss function at $\bm{w}^*$ as follow:
\begin{align}
\label{eq:second order taylor expansion}
L(\bm{w}) = L(\bm{w}^*) + \frac{1}{2}(\bm{w} - \bm{w}^*)^\top \mathbf{H}_* (\bm{w} - \bm{w}^*) + o(\|\bm{w} - \bm{w}^*\|_2^2),
\end{align}
which further suggests that $\nabla L(\bm{w}) = \mathbf{H}_*(\bm{w} - \bm{w}^*)$ if taking gradient with respect to $\bm{w}$ for both sides of Equation \ref{eq:second order taylor expansion}.
Plugging Equation \ref{eq:second order taylor expansion} into Equation \ref{eq:log likelihood explicit form}, we further obtain:
\begin{align}
    \nonumber & \ln{p(\bm{w} | \bm{w}^*)} = \frac{1}{2}\sum_{i=1}^d \ln\bigg(\frac{\big(2 - \frac{\lambda}{1+\mu}(\sigma_i(\mathbf{H}_*) + \alpha)\big)\big(\sigma_i(\mathbf{H}_*) + \alpha\big)}{2L_*\sigma_i(\mathbf{H}_*)} \bigg) + \text{const.} \\
    \nonumber & \qquad -\frac{S(1-\mu)}{2\lambda L_*}\bigg(1 - \frac{\lambda\alpha}{1+\mu}\bigg)\|\bm{w} - \bm{w}^*\|_2^2  \\
    \nonumber & \quad\qquad - \frac{S(1-\mu)\alpha}{4\lambda L_*}\bigg(2 - \frac{\lambda\alpha}{1+\mu}\bigg)(\bm{w} - \bm{w}^*)^\top \mathbf{H}_*^{-1}(\bm{w} - \bm{w}^*) \\
    \nonumber & \qquad\qquad + \frac{S(1-\mu)}{4(1+\mu)L_*}(\bm{w} - \bm{w}^*)^\top \mathbf{H}_*(\bm{w} - \bm{w}^*) \\
    \nonumber & \quad = - \frac{d}{2} \ln L_* + \frac{1}{2}\sum_{i=1}^d \ln\bigg(\frac{\big(2 - \frac{\lambda}{1+\mu}(\sigma_i(\mathbf{H}_*) + \alpha)\big)\big(\sigma_i(\mathbf{H}_*) + \alpha\big)}{\sigma_i(\mathbf{H}_*)} \bigg)+ \text{const.} \\
    \nonumber & \qquad - \frac{S(1-\mu)}{2\lambda}\bigg(1 - \frac{\lambda\alpha}{1+\mu}\bigg) \cdot \frac{\|\bm{w} - \bm{w}^*\|_2^2}{L_*} \\
    \nonumber & \quad\qquad - \frac{S(1-\mu)\alpha}{4\lambda}\cdot \bigg(2 - \frac{\lambda\alpha}{1+\mu}\bigg) \cdot \frac{\nabla L(\bm{w})^\top \mathbf{H}_*^{-3}\nabla L(\bm{w})}{L_*} \\
    & \qquad\qquad + \frac{S(1-\mu)}{2(1+\mu)} \cdot \frac{L(\bm{w}^*)}{L_*} + o(\|\bm{w} - \bm{w}^*\|_2^2).
\end{align}
Omitting the constant and negligible terms, we thus complete the proof of Theorem \ref{thm:sgd posterior new}. Note that we keep the $O(\alpha^2)$ term in Equation \ref{eq:log likelihood explicit form} and Theorem \ref{thm:sgd posterior new} for the sake of completeness but expect it to be negligible compared with other terms, due to the fact that $\alpha$ is typically set as a very small constant within $[0,1)$.
\end{proof}

\section{Optimal Membership-Inference Score}
\label{app:optimal membership inference sgd}

\subsection{Proof for Theorem \ref{thm:optimal mia score}}
\label{app:optimal mia score}

\begin{proof}
To derive the scoring function for an optimal membership inference, we need to compute the ratio between $p(\bm{w}|\bm{w}^*_1)$ and $p(\bm{w}|\bm{w}^*_0)$, where $\bm{w}^*_0$ (resp. $\bm{w}^*_1$) denotes a local minimum (close to $\bm{w}$) of the training loss function with respect to $\{\bm{z}_2, \ldots, \bm{z}_n\}$ (resp. $\{\bm{z}_1, \ldots, \bm{z}_n\}$). Note that we've obtained the posterior distribution of $\bm{w}$ in Theorem \ref{thm:sgd posterior new}. Therefore, the remaining task is to analyze the following terms:
\begin{align}
\label{eq:log diff posterior distribution}
    & \nonumber \ln p(\bm{w}|\bm{w}^*_1) - \ln p(\bm{w}|\bm{w}^*_0) \\
    & = \nonumber -\frac{d}{2}\big[\ln L_1(\bm{w}^*_1) - \ln L_0(\bm{w}^*_0)\big]
    + \frac{1}{2}\sum_{i=1}^d \ln\bigg(\frac{\big(2 - \frac{\lambda}{1+\mu}(\sigma_i(\mathbf{H}_1(\bm{w}^*_1)) + \alpha)\big)\big(\sigma_i(\mathbf{H}_1(\bm{w}^*_1)) + \alpha\big)}{\big(2 - \frac{\lambda}{1+\mu}(\sigma_i(\mathbf{H}_0(\bm{w}^*_0)) + \alpha)\big)\big(\sigma_i(\mathbf{H}_0(\bm{w}^*_0)) + \alpha\big)} \cdot \frac{\sigma_i(\mathbf{H}_0(\bm{w}^*_0))}{\sigma_i(\mathbf{H}_1(\bm{w}^*_1))}\bigg) \\
    & \nonumber \qquad -\frac{S(1-\mu)}{2\lambda}\bigg(1 - \frac{\lambda\alpha}{1+\mu}\bigg)\bigg(\frac{\|\bm{w} - \bm{w}^*_1\|_2^2}{L_1(\bm{w}^*_1)} - \frac{\|\bm{w} - \bm{w}^*_0\|_2^2}{L_0(\bm{w}^*_0)} \bigg) + \frac{S(1-\mu)}{2(1+\mu)}\bigg(\frac{L_1(\bm{w})}{L_1(\bm{w}^*_1)} - \frac{L_0(\bm{w})}{L_0(\bm{w}^*_0)}\bigg) \\
    & \qquad -\frac{S(1-\mu)\alpha}{4\lambda}\bigg(2 - \frac{\lambda\alpha}{1+\mu}\bigg)\bigg(\frac{\nabla L_1(\bm{w})^\top\mathbf{H}_1(\bm{w}^*_1)^{-3}\nabla L_1(\bm{w})}{L_1(\bm{w}^*_1)} - \frac{\nabla L_0(\bm{w})^\top\mathbf{H}_0(\bm{w}^*_0)^{-3}\nabla L_0(\bm{w})}{L_0(\bm{w}^*_0)}\bigg),
\end{align}
where $\mathbf{H}_0(\bm{w}_0^*)$ (resp. $\mathbf{H}_1(\bm{w}_1^*)$) denotes the Hessian of $L_0$ (resp. $L_1$) at $\bm{w}^*_0$ (resp. $\bm{w}^*_1$).
Since both $\bm{w}_0^*$ and $\bm{w}_1^*$ are close to parameters of the observed victim model $\bm{w}$, so we can approximate the corresponding loss using second-order Taylor expansion. Also, according to \Cref{asmp:hessian}, we know $\mathbf{H}_0(\bm{w}_0^*) = \mathbf{H}_1(\bm{w}_1^*) = \mathbf{H}_*$ and $L_0(\bm{w}_0^*) = L_1(\bm{w}^*_1) = L_*$.
Thus, we can simplify Equation \ref{eq:log diff posterior distribution} and obtain the following form:
\begin{align}
\label{eq:log diff posterior distribution simplified}
    & \nonumber \ln p(\bm{w}|\bm{w}^*_1) - \ln p(\bm{w}|\bm{w}^*_0) \\
    & = -\frac{S(1-\mu)}{2\lambda L_*}\bigg(1 - \frac{\lambda\alpha}{1 + \mu}\bigg) \big(\|\bm{w} - \bm{w}^*_1\|_2^2 - \|\bm{w} - \bm{w}^*_0\|_2^2\big) + \frac{S(1-\mu)}{2(1+\mu)L_*} \big(L_1(\bm{w}) - L_0(\bm{w})\big) \nonumber \\
    & \qquad - \frac{S(1-\mu)\alpha}{4\lambda L_*}\bigg(2 - \frac{\lambda\alpha}{1+\mu}\bigg)\big(\nabla L_1(\bm{w})^\top\mathbf{H}_*^{-3}\nabla L_1(\bm{w}) - \nabla L_0(\bm{w})^\top\mathbf{H}_*^{-3}\nabla L_0(\bm{w})\big) \nonumber \\
    & = -\frac{S(1-\mu)}{2\lambda L_*}\bigg(1 - \frac{\lambda\alpha}{1 + \mu}\bigg) \big(\nabla L_1(\bm{w})^\top \mathbf{H}_*^{-1} \mathbf{H}_*^{-1} \nabla L_1(\bm{w}) - \nabla L_0(\bm{w})^\top \mathbf{H}_*^{-1} \mathbf{H}_*^{-1} \nabla L_0(\bm{w}) \big) +  \frac{S(1-\mu)\ell(\bm{w},\bm{z}_1)}{2n(1+\mu)L_*} \nonumber \\
    & \qquad - \frac{S(1-\mu)\alpha}{4n\lambda L_* }\bigg(2 - \frac{\lambda\alpha}{1+\mu}\bigg)\bigg( 2\nabla L_0(\bm{w})^\top\mathbf{H}_*^{-3}\nabla\ell(\bm{w}, \bm{z}_1) + \frac{1}{n} \ell(\bm{w}, \bm{z}_1)^\top\mathbf{H}^{-3}_*\nabla\ell(\bm{w}, \bm{z}_1)  \bigg) \nonumber \\
    & = -\frac{S(1-\mu)}{2n\lambda L_*}\bigg(1 - \frac{\lambda\alpha}{1 + \mu}\bigg)\bigg( 2\nabla L_0(\bm{w})^\top\mathbf{H}_*^{-1}\mathbf{H}_*^{-1}\nabla\ell(\bm{w}, \bm{z}_1) + \frac{1}{n} \|\mathbf{H}^{-1}_*\nabla\ell(\bm{w}, \bm{z}_1)\|_2^2  \bigg) +  \frac{S(1-\mu) \ell(\bm{w},\bm{z}_1)}{2n(1+\mu)L_*} \nonumber \\
    & \qquad - \frac{S(1-\mu)\alpha}{4n\lambda L_*}\bigg(2 - \frac{\lambda\alpha}{1+\mu}\bigg)\bigg( 2\nabla L_0(\bm{w})^\top\mathbf{H}_*^{-3}\nabla\ell(\bm{w}, \bm{z}_1) + \frac{1}{n} \ell(\bm{w}, \bm{z}_1)^\top\mathbf{H}^{-3}_*\nabla\ell(\bm{w}, \bm{z}_1)  \bigg),
\end{align}
where the second equality holds because of the Taylor approximation $\nabla L_i(\bm{w}) - \nabla L_i(\bm{w}^*_i) = \mathbf{H}_* (\bm{w} - \bm{w}^*_i)$ for $i\in\{0, 1\}$. Moreover, according to Lemma \ref{lem:optimial MI likelihood}, we know the optimal membership inference is given by:
\begin{align}
\label{eq:optimal membership inference general form}
    \mathcal{M}(\bm{w}, \bm{z}_1) = \mathbb{E}_{\mathcal{T}} \bigg[\sigma\bigg(\ln\bigg(\frac{p(\bm{w}| m_1=1, \bm{z}_1, \mathcal{T})}{p(\bm{w}| m_1=0, \bm{z}_1, \mathcal{T})}\bigg) + \ln\bigg(\frac{\gamma}{1-\gamma}\bigg)\bigg)\bigg],
\end{align}
where $\sigma(u) = (1+\exp(-u))^{-1}$ is the Sigmoid function, $\mathcal{T} = \{\bm{z}_2,\ldots,\bm{z}_n, m_2, \ldots, m_n\}$, and $\gamma = \mathbb{P}(m_i=1)$. Plugging Equation \ref{eq:log diff posterior distribution simplified} into Equation \ref{eq:optimal membership inference general form}, we obtain
\begin{align*}
    \mathcal{M}(\bm{w}, \bm{z}_1) = \mathbb{E}_{\mathcal{T}} \bigg[\sigma \bigg(\frac{S(1-\mu)}{2nL_*}\bigg( \frac{\ell(\bm{w}, \bm{z}_1)}{(1+\mu)} - \frac{1}{\lambda}\big(I_1 + I_2 + I_3 + I_4\big) \bigg) + t_\gamma \bigg) \bigg],
\end{align*}
where $I_1, I_2, I_3, I_4$ and $t_\gamma$ are defined as:
\begin{align*}
I_1 & := \frac{1}{n}\bigg(1 - \frac{\lambda\alpha}{1+\mu}\bigg) \cdot \|\mathbf{H}_*^{-1}\nabla\ell(\bm{w}, \bm{z}_1)\|_2^2, \\
I_2 & := 2\bigg(1 - \frac{\lambda\alpha}{1+\mu}\bigg) \cdot \big(\mathbf{H}_*^{-1}\nabla L_0(\bm{w})\big)^\top \big(\mathbf{H}_*^{-1}\nabla\ell(\bm{w}, \bm{z}_1)\big), \\
I_3 & := \frac{\alpha}{2n}  \bigg(2 - \frac{\lambda\alpha}{1+\mu}\bigg)\cdot \big(\mathbf{H}_*^{-1}\nabla\ell(\bm{w}, \bm{z}_1)\big)^\top \big(\mathbf{H}_*^{-1}\big(\mathbf{H}_*^{-1}\nabla\ell(\bm{w}, \bm{z}_1)\big)\big), \\
I_4 & := \alpha \bigg(2 - \frac{\lambda\alpha}{1+\mu}\bigg) \cdot \big(\mathbf{H}_*^{-1}\nabla L_0(\bm{w})\big)^\top \big(\mathbf{H}_*^{-1}\big(\mathbf{H}_*^{-1}\nabla\ell(\bm{w}, \bm{z}_1)\big)\big), \\
t_\gamma &:= \ln\bigg(\frac{\gamma}{1-\gamma}\bigg).
\end{align*}
Therefore, we complete the proof of Theorem \ref{thm:optimal mia score}.
\end{proof}

\subsection{Connection with LOSS attack}
\label{app:connection_with_loss}
Note that while there are additional terms in our optimal membership-inference score, there is another critical difference: the loss function has its sign flipped when compared to existing results \citep{yeom_privacy_2018, sablayrolles_white-box_2019}. While this may seem counter-intuitive at first glance, we show below the addition $-(I_1+I_2+I_3+I_4)$ terms in Equation \ref{eq:optimal mia score} are expected to be negatively correlated to the loss function, leading to the proposed scoring function, in fact, aligns with the intuition of existing results.
For simplicity, we consider the setting without regularization (\ie $\alpha = 0$) and the Hessian matrix has full rank.

According to the assumption of quadratic loss around $\bm{w}^*$, we have the following observations:
\begin{align*}
    L(\bm{w}) - L_* = \frac{1}{2}(\bm{w}-\bm{w}^*)^{\top} \mathbf{H}_* (\bm{w}-\bm{w}^*) = \frac{1}{2}(\bm{w}-\bm{w}^*)^{\top} \mathbf{U}^\top \mathrm{diag}\{\sigma_1, \ldots, \sigma_d\} \mathbf{U} (\bm{w} - \bm{w}^*),
\end{align*}
where $\mathbf{U}^\top \mathrm{diag}\{\sigma_1, \ldots, \sigma_d\} \mathbf{U}$ is the eigenvalue decomposition of $\mathbf{H}$. Let $\bm{v} = \mathbf{U} (\bm{w} - \bm{w}^*)$. Since $\mathbf{U}$ is an orthonormal matrix, we know $\|\bm{v}\|_2 = \|\bm{w} - \bm{w}^*\|_2$. Thus, we obtain
\begin{align*}
    \sigma_d \cdot \|\bm{w} - \bm{w}^*\|_2^2 \leq \sigma_j \cdot \|\bm{v}\|_2^2 = 2(L(\bm{w}) - L_*) = \sum_{j=1}^d \sigma_j \cdot v_j^2 \leq \sigma_1 \cdot \|\bm{v}\|_2^2 = \sigma_1 \cdot \|\bm{w} - \bm{w}^*\|_2^2,
\end{align*}
which further suggests (provided the Hessian has full rank)
\begin{align}
\label{eq:bound l2 norm model parameters}
    \frac{1}{\sigma_1} \leq \frac{\|\bm{w} - \bm{w}_i^*\|_2^2}{2(L_i(\bm{w}) - L_*)} \leq \frac{1}{\sigma_d} \quad \text{ for any } i\in\{0,1\}.
\end{align}
Based on Assumption \ref{asmp:hessian}, we assume that the Hessian structure and the loss function value remain unchanged with and without a single record $\bm{z}_1$. We hypothesize that the ratio $\frac{1}{k} = \frac{\|\bm{w} - \bm{w}_i^*\|_2^2}{2(L_i(\bm{w}) - L_*)}$ also remains similar for $i=0$ and $i=1$, where $k\in[\sigma_d, \sigma_1]$. Therefore, we have
\begin{align}
\label{eq:bound l2 norm difference}
\frac{2\ell(\bm{w},\bm{z}_1)}{n\sigma_1} \leq \|\bm{w} - \bm{w}_1^*\|_2^2 - \|\bm{w} - \bm{w}_0^*\|_2^2 \leq \frac{2\ell(\bm{w},\bm{z}_1)}{n\sigma_d}.
\end{align}
Note that the derivation from Equation \ref{eq:bound l2 norm model parameters} to Equation \ref{eq:bound l2 norm difference} is not mathematically rigorous, but as long as the record $\bm{z}_1$ is not too deviated from the data distribution, we expect the above inequality holds.
Plugging Equation \ref{eq:bound l2 norm difference} into the log-likelihood term inside $\mathcal{M}(\bm{w}, \bm{z}_1)$ (first equality in Equation \ref{eq:log diff posterior distribution simplified} with $\alpha = 0$), we get
\begin{align}
    \label{eq:upper_bound_loss_connection}
    \nonumber \ln p(\bm{w}|\bm{w}^*_1) - \ln p(\bm{w}|\bm{w}^*_0) &= \frac{S(1-\mu)}{2L_*} \bigg(-\frac{1}{\lambda} (\|\bm{w} - \bm{w}^*_1\|_2^2 - \|\bm{w} - \bm{w}^*_0\|_2^2) + \frac{\ell(\bm{w},\bm{z}_1)}{(1+\mu)n}\bigg) \\
    &= \frac{S(1-\mu)}{2L_*} \bigg(\frac{1}{1+\mu} - \frac{2}{\lambda k}\bigg) \frac{\ell(\bm{w}, \bm{z}_1)}{n},
\end{align}
where $k$ is some real number that falls into $[\sigma_d, \sigma_1]$. 
In addition, for the case of full-rank Hessian, it is easy to see that the $i$-th eigenvalue of $\bm\Sigma$ (\Cref{eq:explicit model fluctuation sgd}) can be written as:
$$
    \underbrace{\frac{S(1-\mu)}{2\lambda L_*}}_{\text{positive}}\bigg(2 - \frac{\lambda\sigma_i}{1 + \mu}\bigg)^{-1}.
$$
Since the covariance matrix $\bm\Sigma$ is positive semi-definite and invertible, it must follow that all of its eigenvalues are positive:
$$
2 - \frac{\lambda\sigma_i}{1 + \mu} > 0 \quad\Rightarrow\quad \frac{1}{1 + \mu} - \frac{2}{\lambda\sigma_i} < 0, \quad \text{ for any } i=1,2,\ldots,d.
$$
With the above inequalities in mind, by looking at \Cref{eq:upper_bound_loss_connection}, we get:
\begin{align}
    \ln p(\bm{w}|\bm{w}^*_1) - \ln p(\bm{w}|\bm{w}^*_0) = \underbrace{\frac{S(1-\mu)}{2nL_*}}_{>0} \underbrace{\bigg(\frac{1}{1+\mu} - \frac{2}{\lambda k}  \bigg)}_{<0} \frac{\ell(\bm{w}, \bm{z}_1)}{n}.
\end{align}
The upper limit on the score (hence the score itself) corresponding to \miattackabbr is thus proportional to the negative of the loss function, aligning with intuition (lower loss indicative of overfitting, and thus the record being a member). The score inside \miattackabbr can thus be interpreted (up to some approximation error) as $-f(\bm{w}, \bm{z}_1)\ell(\bm{w}, \bm{z}_1)$ for some $f(\bm{w}, \bm{z}_1) >0$ that essentially accounts for SGD training dynamics, and is a function dependent on parameter access to the target model.

\section{Purchase-100(S) v/s Purchase-100}
\label{app:purchase100s_vs_purchase100}
Model trainers, under practical settings, would not want to produce sub-optimal models. Under the given experimental settings (access to Purchase100 dataset), it is thus crucial to simulate model training setups that would maximize performance. The switch from Purchase-100(S) to Purchase-100 not only improves model performance but also reduces the performance of MIA attacks (\Cref{tab:purchase100s_results}).

\begin{table}[!ht]
\caption{Performance of various attacks on Purchase-100(S) and Purchase-100. There is a clear drop in performance when shifting from Purchase-100(S) to Purchase-100. Statistics for \miattackabbr (CG) are computed on 10000 samples, not the entire dataset.}
\centering
\begin{tabular}{lcrrcrrcrr}
    \toprule
    \multirow{4}{*}{Attack} & \multicolumn{3}{c}{\multirow{2}{*}{Purchase-100}}
    & \multicolumn{6}{c}{Purchase-100(S)} \\
    \cmidrule(lr){5-10} 
    & & & & \multicolumn{3}{c}{MLP-2} &  \multicolumn{3}{c}{MLP-4}
    \\
    \cmidrule(lr){2-4} \cmidrule(lr){5-7} \cmidrule(lr){8-10} 
    & \multirow{3}{*}{AUC} & \multicolumn{2}{c}{\% TPR@FPR} & \multirow{3}{*}{AUC} & \multicolumn{2}{c}{\% TPR@FPR} & \multirow{3}{*}{AUC} & \multicolumn{2}{c}{\% TPR@FPR}\\
    \cmidrule(lr){3-4} \cmidrule(lr){6-7} \cmidrule(lr){9-10}
    & & 1\% & 0.1\% & & 1\% & 0.1\% & & 1\% & 0.1\%\\
    \midrule
    LOSS & .529 $_{\pm.001}$ & 0.97 & 0.00 & .589 $_{\pm .003}$ & 1.04 & 0.00 & .666 $_{\pm .005}$ & 0.00 & 0.00\\
    LiRA & .634 $_{\pm .003}$ & 4.70 & 0.98 & .743 $_{\pm .006}$ & 9.56 & 2.79 & .843 $_{\pm .004}$ & 25.17 &  9.09 \\
    \midrule
    \miattackabbr (Ours) & .703 $_{\pm .004}$ & 13.69 & 7.52 & .791 $_{\pm .003}$ & 19.95 & 18.99 & --- & --- & ---\\
    \miattackabbr (CG) & .701 $_{\pm .009}$ & 13.72 & 7.55 & .791 $_{\pm .005}$ & 20.09 & 19.09 & .691 $_{\pm .007}$ & 1.14 & 0.16\\
    \bottomrule
\end{tabular}
\label{tab:purchase100s_results}
\end{table}

For instance, AUC values for LOSS drop from $\sim0.59$ to $\sim0.53$, and LiRA-Online from $\sim0.74$ to $\sim0.63$. While the smaller version of the dataset has recently been argued not to be very relevant \cite{carlini_membership_2022}, we believe this larger version is still interesting to study since such large datasets are practically relevant. We hope that researchers will aim to use the larger version of the dataset and, in general, train target models to maximize performance (as any model trainer would) within the constraints of their experimental design.

\subsection{\miattackabbr performance on MLP-4}
\label{app:picking epsilon}

For the MLP-4 architecture on the Purchase-100(S) dataset, we observe a significant drop in performance when using \miattackabbr compared to LiRA. We hypothesize that this performance degradation may be due to a mismatch between the damping factor $\epsilon$ used in our experiments ($2e^{-1}$) and the optimal damping factor for the eigenvalue distribution of this larger model. To test this hypothesis, we increased $\epsilon$ to $5e^{-1}$ and repeated the evaluation with \miattackabbr.

This adjustment led to a substantial improvement in performance: the AUC increased to 0.768, with TPRs of 13.11\% and 12.12\% at 1\% and 0.1\% FPR, respectively. This result supports our hypothesis and highlights a current limitation of \miattackabbr. Specifically, the damping factor $\epsilon$ should be scaled according to the model architecture—ideally determined by some percentile of the eigenvalues. However, identifying the optimal scaling method before conducting the audit remains an open question for future work.

It is important to note that further increasing $\epsilon$ does not result in a linear performance improvement; in fact, performance declines across architectures when $\epsilon$ becomes too large, which is expected. The remaining performance gap is likely due to the absence of reference models or a violation of our assumptions about Hessian behavior. However, the superior TPR at very low (0.1\%) FPRs, compared to LiRA, suggests that the former is more likely the cause.

\section{Implementing the \miattackfull}
\label{app:our_attack_imp_details}

For some given record $\bm{z}_1$, $\nabla L_0(\bm{w})$ can be computed by considering all data (except the target record) for which membership is known. To make this step computationally efficient for an audit, we pre-compute $\nabla L_1(\bm{w})$. Then, if the test record is indeed a member, we can compute $\nabla L_0(\bm{w})$ as $\nabla L_1(\bm{w}) - \frac{\nabla \ell(\bm{w}, \bm{z}_1)}{n}$. Note that this is equivalent to computing $\nabla L_0(\bm{w})$ separately for each target record.
The Hessian $\mathbf{H}_*$ is also similarly pre-computed using the model's training data.

\shortsection{\texorpdfstring{Conditioning $\mathbf{H}_*$}{Conditioning H*}}
While computing Hessian matrices for our experiments, we notice the presence of near-zero and small, negative eigenvalues (most of which are likely to arise from precision errors). Such eigenvalues make the Hessian ill-conditioned and thus cannot be inverted directly. We explore two different techniques to mitigate this: damping by adding a small constant $\epsilon$ to all the eigenvalues or a low-rank approximation where only eigenvalues (and corresponding eigenvectors) above a certain threshold $\epsilon$ are used as a low-rank approximation. We ablate over these two techniques for some candidate values of $\epsilon$. Our results (\Cref{tab:condition_h_ablation}) suggest that damping with $\epsilon=2e^{-1}$ works best across all the datasets we test, which is the setting for which we report our main results.

\begin{table}[h]
    \caption{Attack AUCs for various techniques to mitigate ill-conditioned Hessian matrix, with corresponding $\epsilon$ values.}
    \centering
    \begin{tabular}{l cccccc}
    \toprule
    \multirow{2}{*}{Dataset} & \multicolumn{3}{c}{Low-Rank} & \multicolumn{3}{c}{Damping} \\
    & $\epsilon=1e^{-2}$ & $\epsilon=1e^{-1}$ & $\epsilon=2e^{-1}$ & $\epsilon=1e^{-2}$ & $\epsilon=1e^{-1}$ & $\epsilon=2e^{-1}$ \\
    \midrule
    MNIST-Odd & .521 & .530 & .500 & .513 & .535 & .542\\
    FashionMNIST & .551 & .557 & .541 & .533 & .582 & .594\\
    \bottomrule
    \end{tabular}
    \label{tab:condition_h_ablation}
\end{table}

\section{Approximating \texorpdfstring{$L_0$}{L0}}
\label{app:approximate l0}

For auditing purposes, experiments where all but one member is known are useful, but an adversary is unlikely to have this much knowledge of the training data. We experiment with the potential use of \miattackabbr where only partial knowledge of the remaining $n-1$ members may be available to approximate $\nabla L_0$,
Approximating $L_0$ with a fraction of the actual dataset could be useful in not only reducing the computational cost of the audit, but also potentially enabling adversarial use of the attack in threat models where the attacker has partial knowledge of the training data. We evaluate \miattackabbr for versions where $L_0$ is approximated using a randomly-sampled fraction of the training data and report results in \Cref{tab:iha partial l0 p100s}.

\begin{table}[h]
\centering
\caption{Performance of approximation-based variant of \miattackabbr on Purchase100-S (MLP-2), when a fraction of data from $D \setminus \{z_1\}$ is used to approximate $L_0$. Statistics are computed on 10000 samples.}
\begin{tabular}{lcrr}
    \toprule 
    \multirow{2}{*}{Fraction} & \multirow{2}{*}{AUC} & \multicolumn{2}{c}{\%TPR@FPR} \\
    \cmidrule(lr){3-4}
    & & 1\% & 0.1\%\\
    \midrule
    0.2 & .577 $_{\pm .005}$ & 0.85 & 0.11 \\
    0.4 & .607 $_{\pm .007}$ & 1.79 & 0.28 \\
    0.6 & .638 $_{\pm .010}$ & 3.38 & 0.86 \\
    0.8 & .692 $_{\pm .012}$ & 6.47 & 1.59 \\
    0.9 & .733 $_{\pm .010}$ & 13.53 & 4.17 \\
    \midrule
    1.0 & .791 $_{\pm .005}$ & 20.09 & 19.09 \\
    \bottomrule
\end{tabular}
\label{tab:iha partial l0 p100s}
\end{table}

We see a clear degradation in performance when only a subset of data is used--this is especially true for lower fractions, where AUC can drop by as much as $\approx0.2$. More importantly, even when using 90\% of the training data, there is a significant gap in performance. The statistics we compute for \miattackabbr thus do completely utilize knowledge of all other training records. While this result suggests that adversarial use of \miattackabbr would require a very strong adversary (that posseses knowledge of nearly all training records), it also hints at how data poisoning attacks could have a large impact on downstream membership inference. A poisoning adversary could hypothetically craft data a way that interferes with $L_0$ (when relating to the optimal membership adversary) and thus increase/decrease inference risk for other records.

\clearpage
\section{Additional Results}
\label{app:additional results}

\begin{figure}[h!]
    \subfloat[Purchase-100]{%
        \includegraphics[width=.49\linewidth]{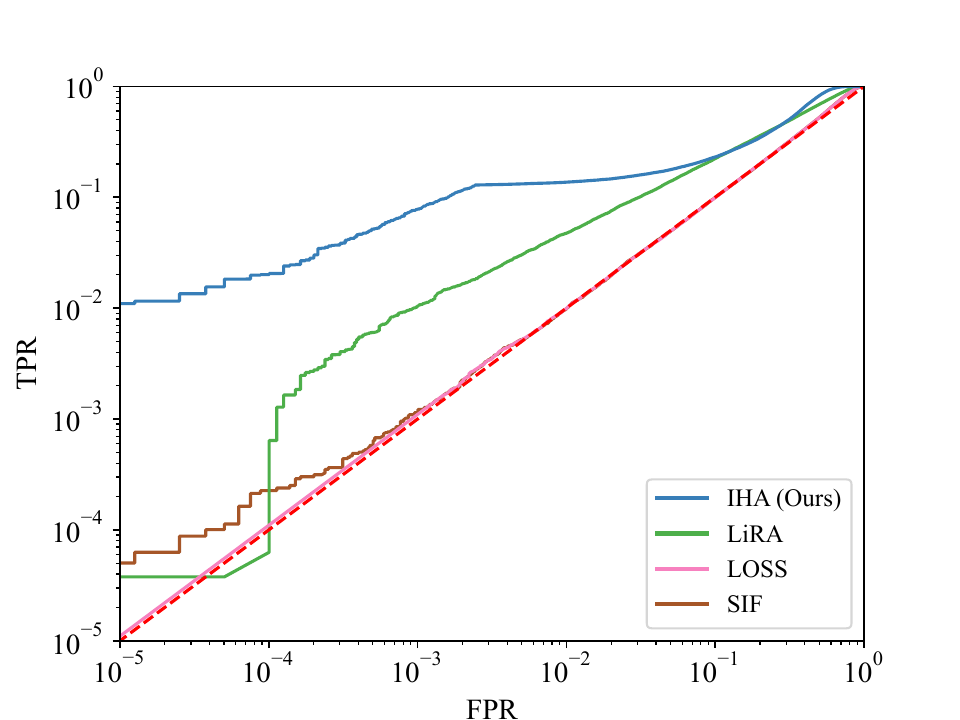}%
        \label{subfig:a}%
    }\hfill
    \subfloat[Purchase-100(S)]{%
        \includegraphics[width=.49\linewidth]{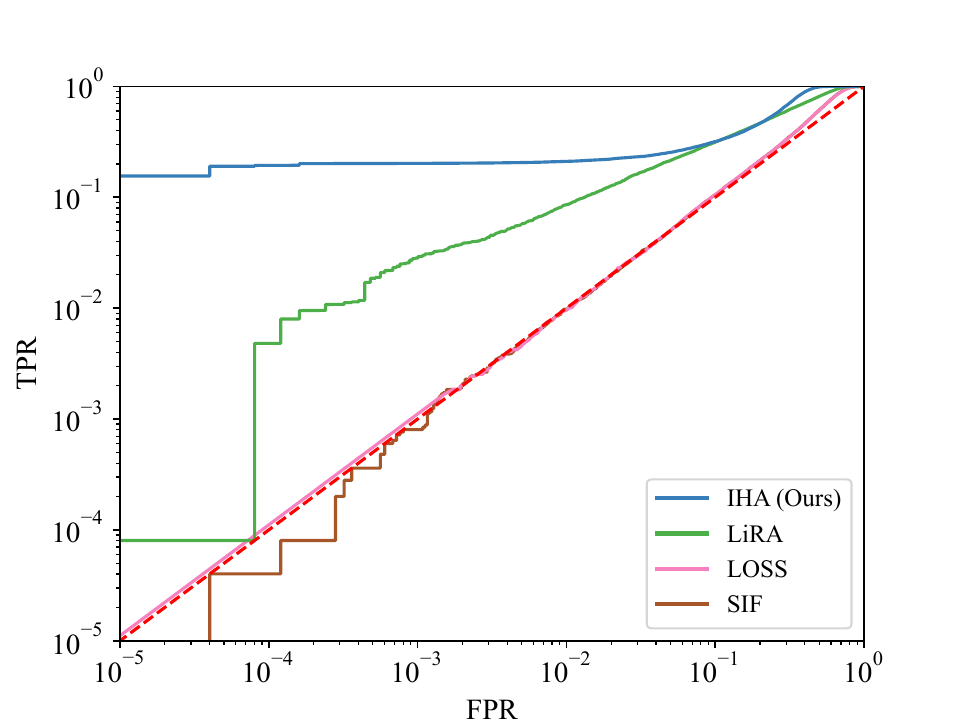}
        \label{subfig:b}%
    }
    \\
    \subfloat[MNIST-Odd]{%
        \includegraphics[width=.49\linewidth]{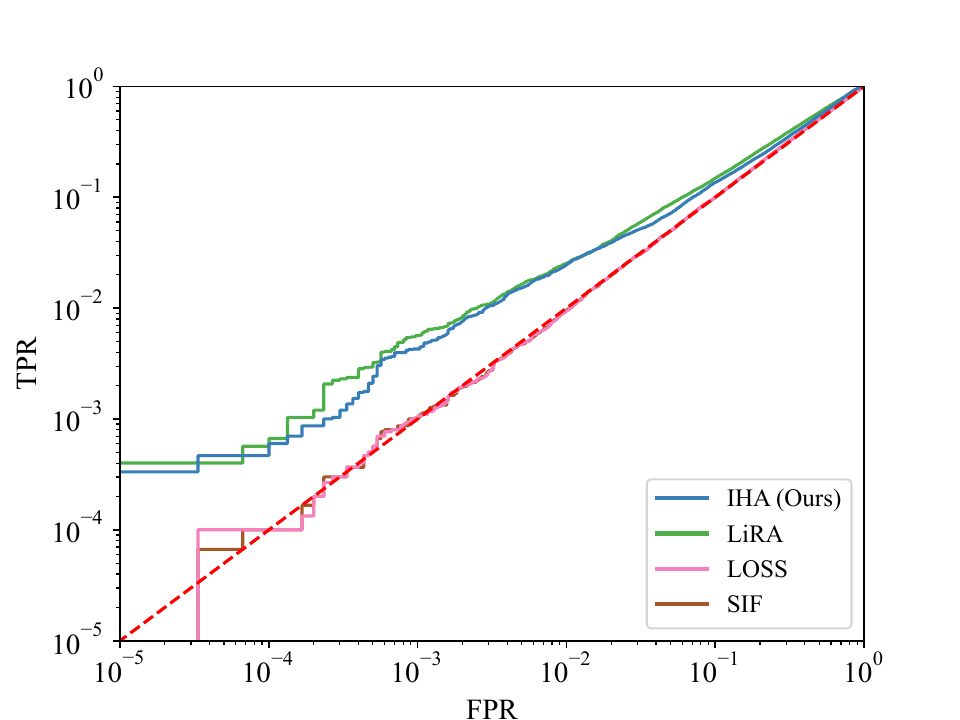}
        \label{subfig:c}%
    }\hfill
    \subfloat[FMNIST]{%
        \includegraphics[width=.49\linewidth]{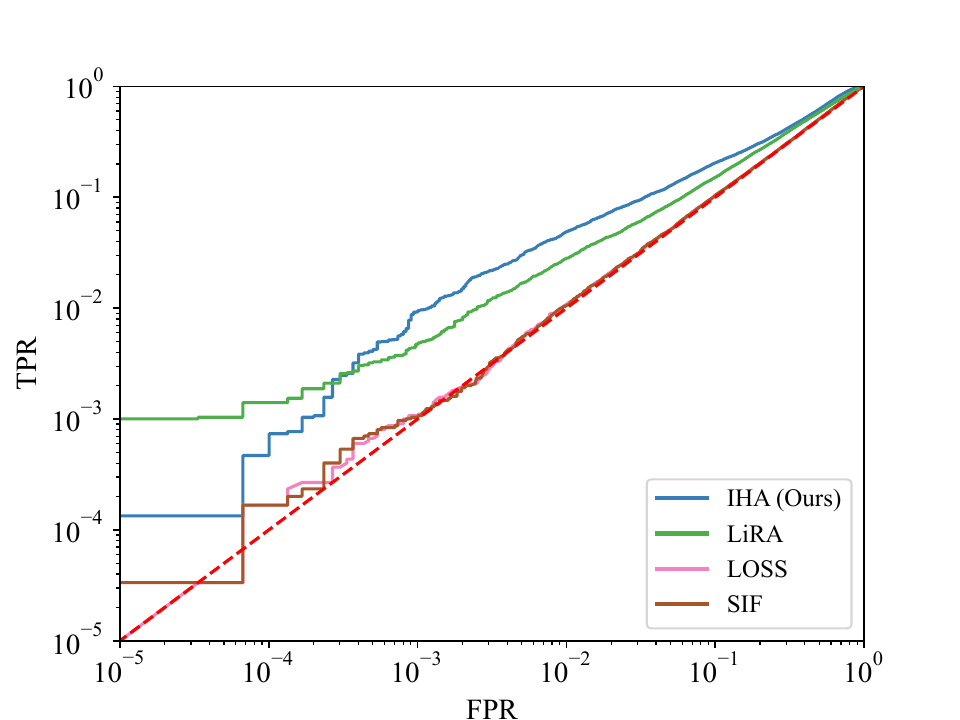}
        \label{subfig:d}%
    }
    \caption{ROC curves for low-FPR region for various attacks and datasets.}
    \label{fig:roc}
\end{figure}

\end{document}